\def\eqref#1{equation~\ref{#1}}
\def\1{\bm{1}}
\DeclareMathAlphabet{\mathsfit}{\encodingdefault}{\sfdefault}{m}{sl}
\SetMathAlphabet{\mathsfit}{bold}{\encodingdefault}{\sfdefault}{bx}{n}
\DeclareMathOperator*{\argmax}{arg\,max}
\DeclareMathOperator*{\argmin}{arg\,min}
\newtheorem{lemma}{Lemma}
\newtheorem{definition}{Definition}
\newtheorem{assumption}{Assumption}
\newtheorem{theorem}{Theorem}
\title{Explore and Control with Adversarial \\Surprise}
\author{%
  Arnaud Fickinger\thanks{Equal contribution.}\ $^{ 1}$, Natasha Jaques$^{*12}$, Samyak Parajuli$^1$, Michael Chang$^1$, \\
 \textbf{ Nicholas Rhinehart$^1$, Glen Berseth$^1$, Stuart Russell$^1$, Sergey Levine$^{12}$} \\
  $^1$UC Berkeley\\
  $^2$Google Research, Brain Team\\
  \texttt{arnaud.fickinger@berkeley.edu, natashajaques@google.com} \\
}
\begin{document}

\maketitle

\begin{abstract}
Unsupervised reinforcement learning (RL) studies how to leverage environment statistics to learn useful behaviors without the cost of reward engineering. However, a central challenge in unsupervised RL is to extract behaviors that meaningfully affect the world and cover the range of possible outcomes, without getting distracted by inherently unpredictable, uncontrollable, and stochastic elements in the environment. To this end, we propose an unsupervised RL method designed for high-dimensional, stochastic environments based on an adversarial game between two policies (which we call Explore and Control) controlling a single body and competing over the amount of observation entropy the agent experiences. The Explore agent seeks out states that maximally surprise the Control agent, which in turn aims to minimize surprise, and thereby manipulate the environment to return to familiar and predictable states. The competition between these two policies drives them to seek out increasingly surprising parts of the environment while learning to gain mastery over them. We show formally that the resulting algorithm maximizes coverage of the underlying state in block MDPs with stochastic observations, providing theoretical backing to our hypothesis that this procedure avoids uncontrollable and stochastic distractions. Our experiments further demonstrate that Adversarial Surprise leads to the emergence of complex and meaningful skills, and outperforms state-of-the-art unsupervised reinforcement learning methods in terms of both exploration and zero-shot transfer to downstream tasks.
\end{abstract}

\section{Introduction}

Reinforcement learning methods have attained impressive results across a number of domains (e.g., \citet{berner2019dota,kober2013reinforcement,levine2016end,vinyals2019grandmaster}).
However, current RL methods typically require a large number of samples for each new task~\citep{dann2018oracle}. 
In other areas of machine learning, an effective way to mitigate high data requirements has been the use of unsupervised or self-supervised learning~\citep{sutskever2014sequence,radford2019language}. 
Similarly, humans and animals seem to be able to learn rich priors from their own experience without being told what to do, and children engage in structured but unsupervised play in part as a way to acquire a functional understanding of the world~\citep{smith2005development}. Based on this intuition, unsupervised RL methods rely on intrinsic motivation (IM): task-agnostic objectives that incentivize the agent to autonomously explore the world and learn behaviors that can be used to solve a range of downstream tasks with little supervision. A general strategy is to exactly or approximately express this task-agnostic objective in the form of a reward function that uses only environment statistics, and then optimize it using standard RL algorithms. 

A reasonable goal for a good unsupervised learning algorithm is to fully explore the state space of the environment, since this ensures the agent will have the experience which with to learn the optimal policy for a downstream task. Therefore, past work on IM has frequently focused on novelty-seeking agents that maximize surprise or prediction error \citep{achiam2017surprise, schmidhuber1991possibility, yamamoto2010curiosity, pathak2017curiosity, burda2018exploration}. However, these methods are vulnerable to becoming distracted by inherently stochastic elements of the environment, such as a \textit{``noisy TV"} \citep{schmidhuber2010formal}. In contrast, active inference researchers inspired by biological agents have focused on developing agents that seek to control their environment and \textit{minimize} surprise \citep{friston2009free,friston2009reinforcement,friston2016active,berseth2019smirl}. These methods suffer from the opposite issue, the \textit{``dark room problem''}, in which a surprise-minimizing agent in a low-entropy environment does not need to learn any behaviors at all in order to satisfy its objective \citep{friston2012free}. Yet humans seem to maintain a balance between optimizing for both novelty and familiarity. For example, a child in a play room does not just try to toss their toys on the floor in every possible pattern, or immediately put them away in the toy box, but instead tries to stack them together, find new uses for parts, or combine them in various structured ways. 

We argue that an effective unsupervised RL method should find the right balance between exploration and control.  With this goal in mind, we introduce a new algorithm based on an adversarial game between two policies, which take turns sequentially acting for the same RL agent. The goal of the \textit{Control} policy is to minimize surprise, by learning to manipulate its environment in order to return to safe and predictable states.  In turn, the \textit{Explore} policy is novelty-seeking, and attempts to maximize surprise for the Control policy, putting it into a diverse range of novel states.
When combined, the two adversaries engage in an arms race, repeatedly putting the agent into challenging new situations, then attempting to gain control of those situations. Figure \ref{fig:intro} shows an illustration of the method, including a sample interaction. Rather than simply adding noise to the environment, the Explore policy learns to adapt to the Control policy, and to search for increasingly challenging situations from which the Control policy must recover. Thus our method, \textit{Adversarial Surprise} (AS), leverages the power of multi-agent training to generate a curriculum of increasingly challenging exploration and control problems, leading to the emergence of complex, meaningful behaviours. 

The contributions of this paper are:
i) The Adversarial Surprise algorithm;
ii) Theoretical results which prove that when the environment is formulated as a stochastic Block MDP \citep{du2019provably}, traditional surprise-maximizing methods will fail to fully explore the underlying state space, but Adversarial Surprise will succeed;
iii) Empirical evidence which supports the theoretical results, and shows that AS fully explores the state space of both Block MDPs and traditional benchmarking environments like VizDoom;
iv) Experiments that compare AS to state-of-the-art unsupervised RL baselines Random Network Distillation (RND) \citep{burda2018exploration}, Asymmetric Self-Play (ASP) \citep{sukhbaatar2017intrinsic}, Adversarially Guided Actor-Critic (AGAC) \citep{flet2021adversarially}, and Surprise Minimizing RL (SMiRL) \citep{berseth2019smirl}, and show that AS is able to explore more effectively, learn more meaningful behaviors, and achieve higher task reward when transferred zero-shot to common benchmarking environments Atari and VizDoom.
Videos of our agents are available on the project website: \url{https://sites.google.com/corp/view/adversarialsurprise/home}, and show that AS is able to learn interesting, emergent behaviors in Atari, VizDoom, and MiniGrid, even \textit{without ever having trained with the game reward}. 


\begin{figure}[t!]
\centering
\includegraphics[width=\linewidth]{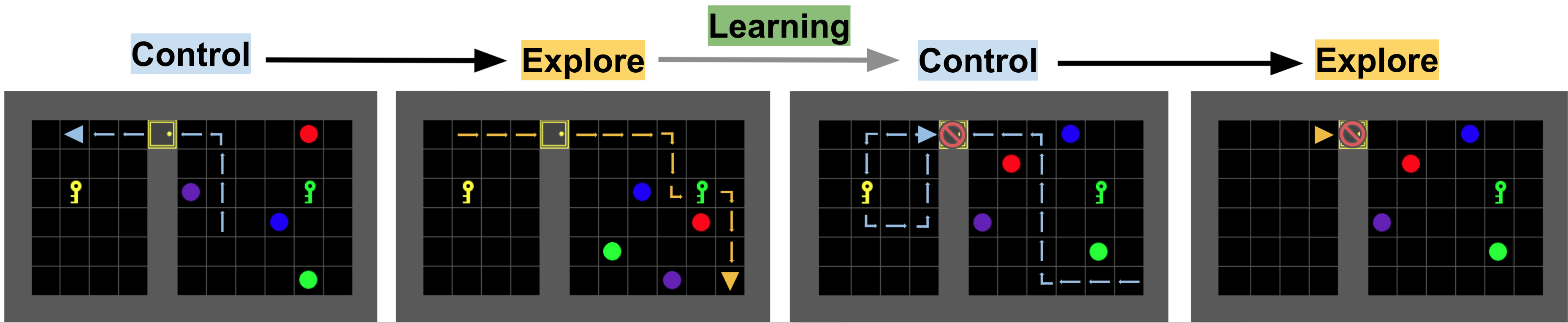}
\caption{Adversarial Surprise is a multi-agent competition in which two policies take turns controlling a single agent. The Explore policy acts first, and tries to put the agent into surprising, high entropy states. On its turn, the Control policy tries to minimize surprise by finding familiar, low-entropy, predictable states. As training continues, the competition drives the agents to learn increasingly complex behaviors. In the above 
example, the Control policy eventually learns to pick up a key and lock the door to prevent the Explore agent from taking it into the room with randomly moving objects (a \textit{noisy TV} state).}
\label{fig:intro}
\vspace{-0.65cm}
\end{figure}

\section{Related Work}




\textbf{Novelty-seeking and exploration methods} lead the agent to increase coverage of the environment. A simple way to implement novelty-seeking is to maximize the prediction error of a world model \citep{achiam2017surprise, schmidhuber1991possibility, yamamoto2010curiosity, pathak2017curiosity, burda2018exploration,raileanu2020ride,zhang2020bebold}. Random Network Distillation (RND) \citep{burda2018exploration} is a highly effective example of one such method. However, 
a major problem of prediction-error-based methods is that the intrinsic reward is not useful when the environment contains aleatoric uncertainty, i.e. inherently stochastic elements. This problem is often referred to as the \textit{noisy TV problem}, after \citet{schmidhuber2010formal} used the example of an agent becoming stuck staring at static on a TV screen.
We show in this work that RND is indeed vulnerable to this problem, performing poorly in stochastic environments.

Several works have proposed solutions to deal with aleatoric uncertainty. 
For example, some approximate information gain on the agent's dynamics model of the environment \citep{houthooft2016vime,still2012information,bellemare2016unifying,pathak2017curiosity,schmidhuber1991possibility}, using variational Bayes or ensemble disagreement \citep{shyam2019model, pathak2019self,houthooft2016vime}. However, implementing such Bayesian procedures is difficult, because it requires scalable and effective modeling of epistemic uncertainty, which itself is a major open problem with high-dimensional models such as neural networks~\citep{bhattacharya2020statistical}. Another method based on maximizing information gain between the agent's actions and future state is known as \textit{empowerment} \citep{klyubin2005all,salge2014empowerment,eysenbach2018diversity,sharma2019dynamics}, but can also be difficult to approximate for high-dimensional states \citep{karl2015efficient,zhao2020efficient,de2018unified,zhang2020metacure,mohamed2015variational,gregor2016variational,hansen2019fast}. Instead of attempting to directly approximate information gain, our approach maximizes state coverage via an adversarial competition over observation entropy. As we will show, this recovers an effective coverage strategy even in the presence of rich, stochastic observations, and performs well in practice.

The asymptotic policy learned by standard novelty-seeking methods is not exploratory. Recent work tries to learn a policy that approximately maximizes the state marginal entropy at convergence \citep{hazan2019provably, lee2019efficient}. The state marginal entropy is hard to compute in general, and recent work has proposed various approximations \citep{seo2021state, liu2021behavior, mutti2021task}. We prove that even with only noisy observations of the underlying state, our method asymptotically maximizes the state marginal entropy at convergence under some assumptions.


\textbf{Surprise minimization and active inference:} The design of the Control agent in our method draws on ideas from surprise minimization and active inference~\citep{friston2009free,friston2009reinforcement,friston2016active}. 
The free energy principle, originating in the neuroscience community, argues that complex niche-seeking behaviors of biological systems are the result of minimizing long-term average surprise on system sensors, leading agents to stay in safe and stable states \citep{friston2009free}. SMiRL is an unsupervised RL method that leverages surprise minimization as a means to discover skills that stabilize an otherwise chaotic environment
~\citep{berseth2019smirl}. However, such approaches require strong assumptions on the stochasticity of the environment. In low-entropy environments, surprise minimization will not lead to learning interesting behavior due to the \textit{dark room problem}, in which the agent is not incentivized to explore the environment to find a better niche \citep{friston2012free}. Our method does not require that the environment is stochastic, since the Explore agent itself drives the Control agent into situations from which surprise minimization is challenging.

\textbf{Multi-Agent competition} has been shown to drive the emergence of complex behavior \citep{baker2019emergent,dennis2020emergent,xu2020continual,leibo2019autocurricula,schmidhuber1997s, campero2020learning}. 
Asymmetric Self-Play (ASP) aims to learn increasingly complex skills via a competition between two policies, 
where one policy (Bob) tries to imitate or reverse the trajectory of the other policy (Alice) 
\citep{sukhbaatar2017intrinsic,openai2021asymmetric}. 
Our empirical results compare to ASP, and demonstrate that it can fail in stochastic environments, because Alice can easily produce random trajectories which are very difficult for Bob to imitate.
Similar to ASP, Adversarially-Guided Actor Critic (AGAC) \citep{flet2021adversarially} introduces an adversary agent which attempts to mimic the action distribution of the actor, while the actor tries to differentiate itself from the adversary.
Like these methods, Adversarial Surprise uses a two-player game to induce exploration and emergent complexity. However, our objective is general and information-theoretic, focusing on minimization or maximization of surprise rather than reaching specific states. Unlike these methods, we provide theoretical results showing AS provides a principled approach to maximizing state coverage in stochastic environments. Finally, our method is reminiscent of recent work that learns separate exploration and exploitation
policies but without competition between them \citep{badia2020never, badia2020agent57, campos2021beyond}


\section{Background}
\label{sec:background}
\textbf{Markov Decision Process (MDP):} An MDP is a tuple $(\mathcal{S,A,T},r,\gamma)$, where $s \in \mathcal{S}$ are states, $a \in \mathcal{A}$ are actions, $r(a,s)$ is the reward function, and $\gamma \in [0,1)$ is a discount factor. At each timestep $t$, an RL agent selects an action $a_t$ according to its policy $\pi(a_t|s_t)$, receives reward $r(a_t,s_t)$, and the environment transitions to the next state according to $\mathcal{T}(s_{t+1}|s_t,a_t)$. The agent uses RL to maximize its cumulative discounted reward over the episode: $R = \sum_{t=0}^T \gamma^t r^i(a_t,s_t)$.

\textbf{Block MDP (BMDP):} 
A BMDP~\citep{du2019provably}  
extends the MDP formalism to the case where the agent cannot observe the true state $s$, but rather observes rich observations $o \sim p(O|s)$.  Unlike a traditional partially-observed MDP (POMDP), a BMDP has an additional disjointness assumption: for any $s,s' \in S$, $s \neq s' \Rightarrow \text{supp}(p(O|s)) \cap \text{supp}(p(O|s)) = \emptyset$. 
The same assumption has been used in several prior works \citep{azizzadenesheli2016reinforcement,dann2018oracle,krishnamurthy2016pac,misra2020kinematic}, and enables the BMDP to naturally capture a wide range of environments which have rich, high-dimensional observations. 
We are interested in stochastic BMDPs, in which the observation distribution is inherently entropic for some states, i.e. $\exists s: H(p(O|s)) > 0$, where $H$ is the entropy. This is a conceptual model of stochastic environments with uncontrollable factors.

\textbf{Maximizing state marginal entropy:}
The state marginal distribution $d^{\pi}(s)$ of a policy $\pi$ is the probability that the policy visits state $s$: $d^{\pi}(s) = \mathbb{E}_{a\sim \pi, s \sim \mathcal{T}}[\frac{1}{T} \sum_{t=0}^{T} \gamma^t \mathbbm{1}(s_t=s)]$ \citep{lee2019efficient}. The \textit{observation} marginal density is defined analogously: 
\begin{align}
    d^{\pi}(o) = \mathbb{E}_{a\sim \pi, s \sim \mathcal{T}, o \sim p(\mathcal{O}|s)}\left[\frac{1}{T} \sum_{t=0}^{T} \gamma^t \mathbbm{1}(o_t=o)\right]
\end{align}
We are interested in agents that are able to fully explore the underlying state space of a stochastic BMDP; that is, agents that can maximize the entropy of the state marginal distribution: $H(d^\pi(s))$. However, in the context of a BMDP, agents cannot directly observe the underlying state $s$ and must attempt to maximize $H(d^\pi(s))$ purely through their ability to alter $d^\pi(o)$.


\section{Adversarial Surprise}

The design of our method is guided by the following desiderata: 
i) It should be able to learn meaningful behaviors even if the environment has high-dimensional, stochastic observations, and avoid the noisy TV problem. Thus, our method should favour low-entropy states when possible, while still seeking novelty.
ii) It should enable the emergence of skills. Since multi-agent training can lead to the emergence of complex skills, 
we create a multi-agent adversarial game between two policies.
iii) The optimized objective should be theoretically sound, in the sense that we can provably show that it optimizes a well-defined exploration metric under some assumptions. In our case, we will show that our two-player game maximizes coverage of the underlying state space of a stochastic BMDP.

To achieve these goals, we design an adversarial game that pits two policies against each other in a competition over the amount of surprise encountered through exploration. 
Specifically, we learn an Explore policy, $\pi^E$, and a Control policy, $\pi^C$. 
The goal of the Control policy is to minimize its own surprise, or observation entropy, using a learned density model $p_{\theta}(o)$. The Explore policy's goal is to maximize the surprise that the Control policy experiences. 
The policies take turns taking actions for the agent, switching back and forth throughout the episode. To simplify notation, assume the policy taking actions switches every $k$ steps, such that for the first $k$ steps, $a_t \sim \pi^E(a_t|o_t)$ (the Explore policy acts), then for timesteps $k-2k$, $a_t \sim \pi^C(a_t|o_t)$ (the Control policy acts). The policies continue switching until the end of the episode. 
Each policy is given $k$ steps to act to enable it to reach states that will be challenging for the other policy to recover from, thus facilitating learning more complex and long-term exploration and control behaviors (see Figure \ref{fig:intro}).



Surprise is computed using a learned density model that estimates the agent's likelihood of experiencing observation $o$,  
$p_{\theta}(o)$. Following \citet{berseth2019smirl}, $p_{\theta}(o)$ is re-initialized each episode and trained using maximum likelihood estimation (MLE) to fit the observations of the agent within a single episode, which are stored in a buffer $\beta$. The density model is either represented using a multivariate Gaussian distribution fit to the pixels within $o$ (as in \citet{berseth2019smirl}), or using independent categorical distributions for each pixel. For more details see Appendix~\ref{app:hparams}.

Because the Control policy is surprise-minimizing, its reward is $r^{C}_t = \log p_{\theta}(o_t)$, which resembles SMiRL \citep{berseth2019smirl}, except using the observation in place of the state. The goal of the Explore policy is to \textit{maximize} the observation surprise \textit{ during the Control agent's turn}. This creates an adversarial game, in which the Explore policy attempts to find surprising situations with which to expose the Control policy, and the Control policy's objective is to recover from them. Therefore, the Explore policy's reward is based on the surprise for the observations of the Control policy. Assume that the Control policy's turn begins at timestep $t^C$, and it receives a total reward of $R^i = \sum_{t=t^C}^{t^C+k} \gamma^k \log p_{\theta}(o_t)$ for that turn. 
Then, the Explore policy's reward is $-R^i$, and is applied to the last timestep of the Explore policy's turn (i.e. timestep $t^C-1$). 
Thus, Adversarial Surprise defines the following adversarial game between the two policies:
\begin{align}
    \max_{\pi^E} \min_{\pi^C} -\mathbb{E}\left[\sum_{t=t^C}^{t^C+k} \log p_{\theta}(o_t)\right], 
\end{align}
where the Explore policy can only effect $p_{\theta}(o_t)$ through the final state that it produces at the end of its turn, which is the initial state for the Control policy. Algorithm \ref{alg:as} (Appendix \ref{app:algo}) gives the full training procedure.

We can show that optimizing these rewards leads the two players to compete over the amount of observation entropy, where the Explore policy's objective is to maximize observation entropy:
\begin{align}
    \mathcal{J}^E = -\mathbb{E}\left[\sum_{t=t^C}^{t^C+k} \log p_{\theta}(o_t)\right] \approx H(d^{\pi_C}(o)),
\end{align}
And the Control policy's goal is to minimize its observation entropy: $-H(d^{\pi_C}(o))$.
We prove the relationship between $\log p_\theta(o)$ and the observation entropy in Appendix~\ref{app:proof}.
\textbf{Implementation details:}
We parameterize the policies for the Explore and Control policy using 
deep 
neural networks (NN) with parameters $\phi^E$ and $\phi^C$, respectively. 
The policy is based on a convolutional NN, which takes in the last 4 observation frames. The networks are trained using Proximal Policy Optimization (PPO) \citep{schulman2017proximal};
further details are given in Appendix \ref{app:hparams}. 
We have found it helpful to only compute the surprise reward using observations from the second half of the Control policy's turn. This gives the agent greater ability to take actions that may lead to initial surprise, but reduce entropy over the long term. 
Finally, we allow the Explore and Control policys to act for a different number of timesteps (that is, we have a separate $k^C$ and $k^E$), enabling us to tune the emphasis on exploration or control depending on the environment.  

\subsection{Adversarial Surprise maximizes state coverage in Block MDPs}
\label{sec:theory}

We will show that AS covers the underlying
state space of a Block MDP, under some assumptions about the density of states with low observation entropy. Full proofs are in Appendix \ref{app:proof}.



We are interested in maximizing the entropy of the marginal \textit{state} distribution $H(d^{\pi}(s))$, using a density model of the observation, $p_{\theta}(o)$. To that end, we prove the following lemma in Appendix \ref{app:proof}:
\begin{lemma}
The cumulative surprise measured by the observation density model $p_{\theta}(o)$ forms an upper bound of the observation marginal entropy $H(d^{\pi}(o))$, which becomes tight when the observation density model fits the observation marginal $d^{\pi}(o)$: $-\mathbb{E}_{\pi}\sum_{t=0}^{\infty}\log p_{\theta}(o_t) \geq H(d^{\pi}(o))$.
\end{lemma}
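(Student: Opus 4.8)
The plan is to recognize the left-hand side as a cross-entropy between the observation marginal $d^{\pi}(o)$ and the model density $p_{\theta}(o)$, and then to invoke the non-negativity of the KL divergence (Gibbs' inequality) to split off the entropy term. Concretely, I would first use linearity of expectation to push the expectation inside the sum, writing $-\mathbb{E}_{\pi}\sum_{t}\log p_{\theta}(o_t) = -\sum_{t}\mathbb{E}_{o_t}[\log p_{\theta}(o_t)]$, where the time-$t$ expectation is taken over the observation marginal at step $t$ induced by $\pi$, the transition kernel $\mathcal{T}$, and the emission distribution $p(\mathcal{O}|s)$. Collecting the per-step marginals, the occupancy $\sum_t P(o_t=o)$ is, up to the normalization implicit in the definition of $d^{\pi}(o)$, proportional to $d^{\pi}(o)$; hence the sum over $t$ collapses into a single expectation over the observation marginal, so that the left-hand side becomes the cross-entropy $\mathbb{E}_{o\sim d^{\pi}}[-\log p_{\theta}(o)]$.

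Next I would apply the identity $\mathbb{E}_{o\sim d^{\pi}}[-\log p_{\theta}(o)] = H(d^{\pi}(o)) + \KL(d^{\pi}\,\|\,p_{\theta})$ and use $\KL(d^{\pi}\,\|\,p_{\theta}) \geq 0$, with equality if and only if $p_{\theta} = d^{\pi}$. This immediately yields $\mathbb{E}_{o\sim d^{\pi}}[-\log p_{\theta}(o)] \geq H(d^{\pi}(o))$, establishing the claimed bound, and the tightness statement is exactly the equality case of Gibbs' inequality, i.e. it holds precisely when the density model fits the observation marginal $d^{\pi}(o)$.

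The main obstacle I anticipate is bookkeeping rather than conceptual: making the collapse of the cumulative sum into an expectation over $d^{\pi}$ precise. Since $d^{\pi}(o)$ as defined carries a $\frac{1}{T}$ factor and a discount $\gamma^{t}$, one must keep track of the constant relating $\sum_t P(o_t=o)$ to $d^{\pi}(o)$ and argue that the infinite sum $\sum_{t=0}^{\infty}$ converges so that the cross-entropy is well defined. I would handle this by treating $d^{\pi}$ as the normalized discounted occupancy measure and absorbing the proportionality constant, or equivalently by establishing the inequality per time step and summing it, so that Gibbs' inequality applies cleanly.
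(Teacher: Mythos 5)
Your proposal is correct and takes essentially the same route as the paper: both collapse the cumulative surprise into the cross-entropy $-\sum_{o} d^{\pi}(o)\log p_{\theta}(o)$ via the (discounted) occupancy measure, and then conclude by Gibbs' inequality, i.e. $\KL\left(d^{\pi}(o)\,\|\,p_{\theta}(o)\right) \geq 0$ with equality exactly when $p_{\theta} = d^{\pi}$. Your explicit handling of the normalization constant relating $\sum_{t} P(o_t = o)$ to $d^{\pi}(o)$ is, if anything, more careful than the paper's proof, which silently absorbs the same $(1-\gamma)$ factor.
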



\begin{lemma}
\label{lemma:observation_entropy}
Given the disjointness assumption of the BMDP stated in Section \ref{sec:background}, we show a useful relation between observation marginal entropy and state marginal entropy:
\begin{align}
    H(d^{\pi}(o)) = \mathbb{E}_{d^{\pi}(s)}H(p(O|S=s)) + H(d^{\pi}(s)). \label{eq:obs_entropy}  
\end{align}
\end{lemma}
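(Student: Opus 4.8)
The plan is to recognize this identity as a direct instance of the chain rule for entropy, in which the disjointness assumption of the BMDP plays the decisive role by forcing the underlying state to be a deterministic function of the observation. Concretely, I would work with the joint distribution over state--observation pairs induced by $\pi$, namely $d^{\pi}(s,o) = d^{\pi}(s)\, p(o|s)$, since the emission distribution $p(O|s)$ is a property of the environment and does not depend on the policy.

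First I would apply the entropy chain rule to this joint in two different orders. Decomposing via the state marginal gives
\begin{align}
H(d^{\pi}(s,o)) = H(d^{\pi}(s)) + H(d^{\pi}(o|s)),
\end{align}
and since the conditional law of $o$ given $s$ is exactly the emission distribution $p(O|S=s)$, the second term equals $\mathbb{E}_{d^{\pi}(s)} H(p(O|S=s))$. Decomposing the same joint via the observation marginal instead gives $H(d^{\pi}(s,o)) = H(d^{\pi}(o)) + H(d^{\pi}(s|o))$. Equating the two expressions and rearranging reduces the claim to showing that the residual conditional entropy $H(d^{\pi}(s|o))$ vanishes.

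The crux is this final step, which is where the BMDP structure is essential. By the disjointness assumption, for any two distinct states $s \neq s'$ the supports $\mathrm{supp}(p(O|s))$ and $\mathrm{supp}(p(O|s'))$ are disjoint. Hence every observation $o$ in the support of $d^{\pi}(o)$ lies in the support of exactly one emission distribution, so there is a well-defined decoding map sending each such $o$ to the unique state that could have produced it. The posterior $d^{\pi}(s|o)$ is therefore a point mass for each $o$, giving $H(d^{\pi}(s|o)) = 0$, and the claimed identity follows immediately.

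I expect the main obstacle to be technical rather than conceptual: in the continuous-observation case the entropy bookkeeping must be done with care, stating the chain rule and the degeneracy of the posterior with respect to appropriate reference measures so that the differential-entropy terms are well defined and the cross terms cancel exactly. Establishing that the decoding map is measurable, and that disjoint supports genuinely collapse the posterior to a point mass rather than merely concentrating it, is the place where the disjointness assumption must be invoked most carefully.
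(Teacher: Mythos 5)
Your proof is correct, but it takes a different route from the one the paper actually writes out. The paper proves the identity by direct expansion: it writes $H(d^{\pi}(o))$ as a sum over pairs $(o,s)$ and uses disjointness at the level of the marginal density --- for $o$ in the support $\mathcal{O}_s$ of $p(O|s)$, the mixture $d^{\pi}(o)=\sum_{s'}p(o|s')\,d^{\pi}(s')$ collapses to the single term $p(o|s)\,d^{\pi}(s)$, so the logarithm splits as $\log p(o|s)+\log d^{\pi}(s)$ and the double sum separates into the emission-entropy and state-entropy terms. You instead apply the chain rule to the joint $d^{\pi}(s,o)$ in both orders and reduce the claim to $H(S|O)=0$, which disjointness delivers via the deterministic decoding map. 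Notably, the paper gestures at exactly your route: the appendix statement of the lemma says it follows ``by noticing that $H(S|O)=0$,'' and after its computation it remarks the identity can ``also'' be obtained from the mutual information between $S$ and $O$ --- but it never writes that argument out, so your proposal is the fleshed-out version of the paper's side remark rather than its main proof. As for what each buys: the paper's computation is self-contained and shows concretely where disjointness bites (the mixture factorizes block-by-block over emission supports), though it implicitly works with discrete sums, and its displayed lines in fact drop the minus signs in front of the entropy sums --- a typo your derivation avoids by never expanding entropies explicitly. Your chain-rule version is more modular, isolating the assumption in the single statement $H(S|O)=0$, and your caveat about the continuous case is well placed: mixing the discrete entropy $H(d^{\pi}(s))$ with differential observation entropies requires a fixed reference measure, but for mutually singular emissions the mixture-entropy identity still holds, and the paper sidesteps this entirely by staying in the discrete setting.
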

See Appendix \ref{app:proof} for the proof. Equation \ref{eq:obs_entropy} shows that maximizing entropy in the marginal observation distribution $d^{\pi}(o)$ amounts to maximizing two terms: the emission entropy 
$H(p(O|S))$,
and the state marginal entropy $H(d^{\pi}(s))$. 

Denote by $\mu^*_{RND}$ the state stationary distribution of an RND agent trying to maximize the observation stationary distribution. It follows by lemma \ref{lemma:observation_entropy} that $\mu^*_{RND}$ is the solution to the entropy-regularized LP
$$\mu^*_{RND} = \argmax_{\mu} \langle \mu,h \rangle + H(\mu)$$
where $h$ is the vector indexed by $S$ giving the emission entropy of all states. This program has a closed form solution which is given by ${\mu^*_{RND}}_i = \frac{e^{h_i}}{\sum_j e^{h_j}}.$ Therefore, the distribution exponentially favors states with high emission entropy. This explains why algorithms which maximize observation entropy fail to explore environments with noisy TV states.

In Adversarial Surprise, the Controller policy is actively trying to transform the observation distribution given by the Explorer policy into a low entropic distribution after $T$ steps. Therefore the objective can be written as
$$\min_{M}\max_{\mu} \langle M\mu,h \rangle + H(M\mu)$$
where $M$ must lie in $\{P^T_{\pi}: \pi \text{ is a stationary policy}\}.$ Therefore the Explorer policy is now constrained to find a maximum in the image of $M$. We provide now a sufficient condition on the structure of the BMDP such that the Control policy can choose a matrix $M$ that induces the Explore policy to maximize the state entropy by maximizing the observation entropy after $T$ steps.

 We first define a (semiquasi)metric on the latent state space: \mbox{$\tilde{d}(s,s') = \min\{k: \exists \pi, P^{\pi}_k(s'|s)=1\}$}, where by convention $P^{\pi}_0(s|s)=1$ for all $s$. In other words, $\tilde{d}(s,s')=k$ if there is a policy that reaches $s'$ from $s$ in $k$ steps with probability 1. We symmetrize this metric by defining the following semimetric: $d(s,s') = \max\{\tilde{d}(s,s'), \tilde{d}(s',s)\}$ 

\begin{definition}
We now give a formal definition of \textit{``dark rooms''}. We say that a state $s$ is a \textbf{dark room} if it has minimal emission entropy:
$H(p(O|S=s)) = \min_{s \in S} H(p(O|S=s))$.  
\end{definition}

\begin{assumption}
We assume that for every state $s$, there is a dark room $s'$ such that $d(s,s') = T$. That is, the set of dark rooms is a $T$-cover of the state space with respect to $d$, and from every state there is a dark room can be reached in exactly $T$ steps.
\label{assumption:cover}
\end{assumption}

We can now state our main result:

\begin{theorem}
Under the BMDP assumption and Assumption \ref{assumption:cover}, the Markov chain induced by the following AS game:
\begin{align}\nonumber
    \max_{d^{\pi_E}(s_0)} \; \min_{d_{1:T}^{\pi_C}(s|s_0)}  H\left(d_{T}^{\pi_C}(o)\right)
\end{align}
$T$-covers the latent state space, 
i.e.,  
for 
all states
$s$, there is a state $s'$ such that $d^{\pi}(s')>0$ and $d(s,s') \leq T$, where $d^{\pi}$ is the state marginal distribution induced by the game between 
the Explore ($\pi_E$) and Control ($\pi_C$) policies.
\end{theorem}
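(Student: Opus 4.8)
The plan is to turn the minimax game into the entropy-regularized linear program foreshadowed in the text and then analyze it in two stages. Using the first lemma (the cumulative-surprise upper bound, which is tight once the density model fits the observation marginal) to identify the Explore objective with the terminal observation entropy $H(d_T^{\pi_C}(o))$, and Lemma~\ref{lemma:observation_entropy} to decompose it, the game becomes $\min_M \max_\mu \; \langle M\mu, h\rangle + H(M\mu)$, where $\mu = d^{\pi_E}(s_0)$ is the Explore terminal (equivalently Control initial) state distribution, $M = P^T_{\pi_C}$ ranges over the $T$-step transition matrices of stationary Control policies, $h$ is the emission-entropy vector, and $M\mu = d_T^{\pi_C}(s)$ is the terminal state distribution. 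Since $H$ is concave and the feasible sets are convex, I would first record that the two optimization orders coincide by a minimax theorem, so that the equilibrium value and its supports are well defined and match the $\max_{\pi_E}\min_{\pi_C}$ form in the statement.

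Inner (Control) step: for fixed $\mu$, I would show that every minimizing $M$ pushes all terminal mass onto dark rooms. Placing any probability on a state whose emission entropy exceeds $h_{\min} := \min_{s} H(p(O\mid s))$ strictly increases $\langle M\mu, h\rangle$, while Assumption~\ref{assumption:cover} guarantees that from every state a dark room is reachable in exactly $T$ steps, so Control can always route the entire terminal distribution into dark rooms. Hence $\mathrm{supp}(M\mu)$ lies among the dark rooms and $\langle M\mu, h\rangle = h_{\min}$ is constant on the optimal face. On this dark-room-confined polytope the emission term drops out and the objective reduces to $h_{\min} + H(M\mu)$, so that maximizing observation entropy becomes exactly maximizing the terminal state entropy $H(d_T^{\pi_C}(s))$ — which is the structural claim the LP discussion was building toward.

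Outer (Explore) step and coverage: with the objective reduced to state entropy over terminal distributions supported on dark rooms, I would argue that the equilibrium terminal distribution has $T$-covering support. The mechanism is that Explore spreads terminal mass over as many dark rooms as the reachability structure allows while Control funnels it onto few. I would show that at the entropy-maximizing equilibrium no dark room needed for the cover can carry zero mass: if a dark room $s'$ had $d_T^{\pi_C}(s') = 0$, Explore could place initial mass on a state for which $s'$ is a forced (or extreme) $T$-step destination, strictly increasing $H(M\mu)$ and contradicting optimality. The support of $d_T^{\pi_C}$ is then a set of dark rooms which, by Assumption~\ref{assumption:cover}, $T$-covers $S$; since each such state has $d^\pi > 0$ and every state admits a dark room within distance $T$, the induced chain $T$-covers the latent state space.

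The main obstacle is precisely this last support argument: ruling out a degenerate equilibrium in which Control collapses all terminal mass onto a single dark room. Whether Explore can prevent this depends on the bipartite reachability structure between admissible starting states and dark rooms, which Assumption~\ref{assumption:cover} only partially constrains. I would address it by a complementary-slackness and perturbation analysis of the entropy-regularized program on the confined polytope, combining the strict concavity of $H$ with the exact $T$-reachability from the assumption to force the maximizer $M\mu$ to load every dark room required to maintain the $T$-cover, rather than only those Control would prefer to retain.
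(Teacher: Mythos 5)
Your proposal follows the paper's own proof essentially step for step: the paper likewise uses Lemma~\ref{lemma:observation_entropy} to split the terminal observation entropy into an emission term and a state-entropy term, uses the exact-$T$ reachability in Assumption~\ref{assumption:cover} to argue that the Control best response places all terminal mass on dark rooms, observes that the emission term is then a constant so the game reduces to maximizing $H(d^{\pi}(s))$ over dark-room-supported marginals, and finishes with the same contraposition you sketch (a dark room carrying zero mass within distance $T$ of some state would contradict entropy maximality over the $T$-cover). One local difference: you open by swapping the optimization order via ``a minimax theorem,'' but the feasible set $\{P^T_{\pi} : \pi \text{ stationary}\}$ consists of $T$-fold products of stochastic matrices and is not convex for $T>1$, so Sion-type results do not apply off the shelf. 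The paper never swaps the order --- it keeps the $\max\min$ form of the theorem and reasons about the Control's best response directly --- and you can and should do the same; the swap buys you nothing.

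The obstacle you flag at the end is genuine, and you should know that the paper's proof does not actually resolve it either. The paper's assertion that the game ``is equivalent to'' the constrained maximization over dark-room-supported marginals silently hands the choice of \emph{which} dark room absorbs each branch of mass to the maximizing Explore player, whereas in the $\max\min$ game that choice belongs to the minimizing Control player, who also wants to shrink the state-entropy term $H(M\mu)$ and hence to merge branches. Assumption~\ref{assumption:cover} guarantees the existence of an exact-$T$ dark room from every state but says nothing about uniqueness or disjointness of the exact-$T$-reachable dark-room sets: in an MDP where every dark room is exactly-$T$-reachable from every state, Control collapses all terminal mass onto a single dark room, the value degenerates to the minimal emission entropy, Explore is indifferent among all strategies, and $T$-coverage holds only under a favorable equilibrium selection. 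Your proposed complementary-slackness and perturbation analysis can only close this under an added hypothesis of roughly the kind you gesture at --- e.g., that each dark room $s'$ is the \emph{unique} exact-$T$-reachable dark room from some state --- which is a strengthening of Assumption~\ref{assumption:cover}, not a consequence of it. So your write-up matches the published route throughout, and is, if anything, more candid than the paper about the one step that neither of you closes.
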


\begin{proof} (sketch.)
Assumption \ref{assumption:cover} guarantees that for any states that the Explore policy reaches, the Control policy can find a low-emission-entropy state within its turn ($T$ timesteps), so that 
$H(p(O|s))$  
is minimized. Thus, by Lemma \ref{lemma:observation_entropy}, when the Control policy is factored out, the objective of the Explore policy becomes:
\begin{align}
    \mathcal{J}^E = H(d^{\pi}(o)) \approx H(d^{\pi}(s)).   
\end{align}
Therefore the Explore policy's objective is to maximize coverage of the BMDP state space.
\end{proof}

\section{Experimental results}
We now present experimental results\footnote{AS performance is measured across the full episode, including both the Explore and Control turns.} designed to assess the following questions: 
    
\textbf{Q1. State coverage:} 
How well does AS explore the underlying state space in a stochastic  BMDP, as compared to baselines? 
Given the theoretical results in Section \ref{sec:theory}, we hypothesize that AS will fully explore the environment, but other methods will become distracted by stochastic elements. 
    
\textbf{Q2. Zero-shot transfer:} Does AS learn behaviors that are useful for downstream tasks? To assess this, we train AS and baseline methods using only intrinsic reward, then transfer the agents zero-shot to standard benchmark environments in Atari \citep{bellemare2013arcade} and VizDoom \citep{Kempka2016ViZDoom}, and measure the amount of game reward obtained. While there is no reason to expect AS to always correlate with the objectives in arbitrary MDPs, we expect that the twin goals of maximizing coverage while achieving high control should correlate well with objectives in many reasonable MDPs. This is particularly true of games, which have a notion of progress that roughly corresponds to coverage, but at the same time have many dangerous states that could result in `death', which leads to an unexpected jump back to the starting state. We hypothesize that AS should, without even being aware of the task reward, perform well in these environments. Comparing to prior methods in these domains is interesting, because prior work has variously argued that both novelty-seeking exploration methods \citep{burda2018exploration} and surprise-minimization methods \citep{berseth2019smirl} should be expected to achieve high scores in these games. 

\textbf{Q3. Emergence of complexity:} If AS is able to achieve high zero-shot transfer performance, we hypothesize that it will be because the adversarial game drives the acquisition of increasingly complex observable behaviors. Therefore, we track the emergence of specific skills throughout training, and qualitatively examine final performance in benchmark environments to identify meaningful skills. 

\textbf{Baselines:} We compare AS to four competitive unsupervised RL baselines: i) Asymmetric Self-Play (ASP) \citep{sukhbaatar2017intrinsic}, a state-of-the-art multi-agent curriculum method; ii) Random Network Distillation (RND) \citep{burda2018exploration}, a state-of-the-art exploration method; 
iii) Adversarially Guided Actor-Critic (AGAC) \citep{flet2021adversarially}, a recently proposed adversarial exploration method; and iv) Surprise Minimizing RL (SMiRL) \citep{berseth2019smirl}, a recently proposed intrinsic motivation based on active inference. 
All methods use the same PPO implementation, with hyperparameters given in Appendix \ref{app:hparams}. 



\textbf{Environments:} To evaluate the above hypotheses, we use three types of environments. To obtain environments that match the precise specifications of the stochastic \textbf{BMDP} formalism, and present an exploration challenge, we constructed a custom family of procedurally generated navigation tasks based on MiniGrid \citep{gym_minigrid}. These environments contain rooms that are either empty (dark), or contain stochastic elements such as flashing lights that randomly change color. They also contain elements such as doors that can be opened with keys, and switches that, when flipped, stop or start the stochastic elements moving. As in MiniGrid, the agent only sees a 5x5 window of the true state. Each room of size 12 represents a hard exploration task for the agent (see Figure \ref{fig:minienv}).
While gridworlds allow us to easily interpret the results and behaviors of the agent, and were used by ASP, SMiRL, and AGAC, we also compare to prior work on two standardized benchmarks with high-dimensional observations. Specifically, we use \textbf{Atari} ALE \citep{bellemare2013arcade}, which was used by both SMiRL and RND to establish their effectiveness, and the \textbf{ViZDoom} environment \citep{Kempka2016ViZDoom}, which was used by AGAC and SMiRL. Due to limited computational resources, we do not conduct experiments in all possible Atari games (which is consistent with prior work \citep{berseth2019smirl,burda2018exploration}), but we do not cherry-pick results; we show results for each of the games that we test.

\begin{figure}[t]
\centering
\begin{subfigure}{.20\textwidth}
  \centering
  \includegraphics[width=\linewidth]{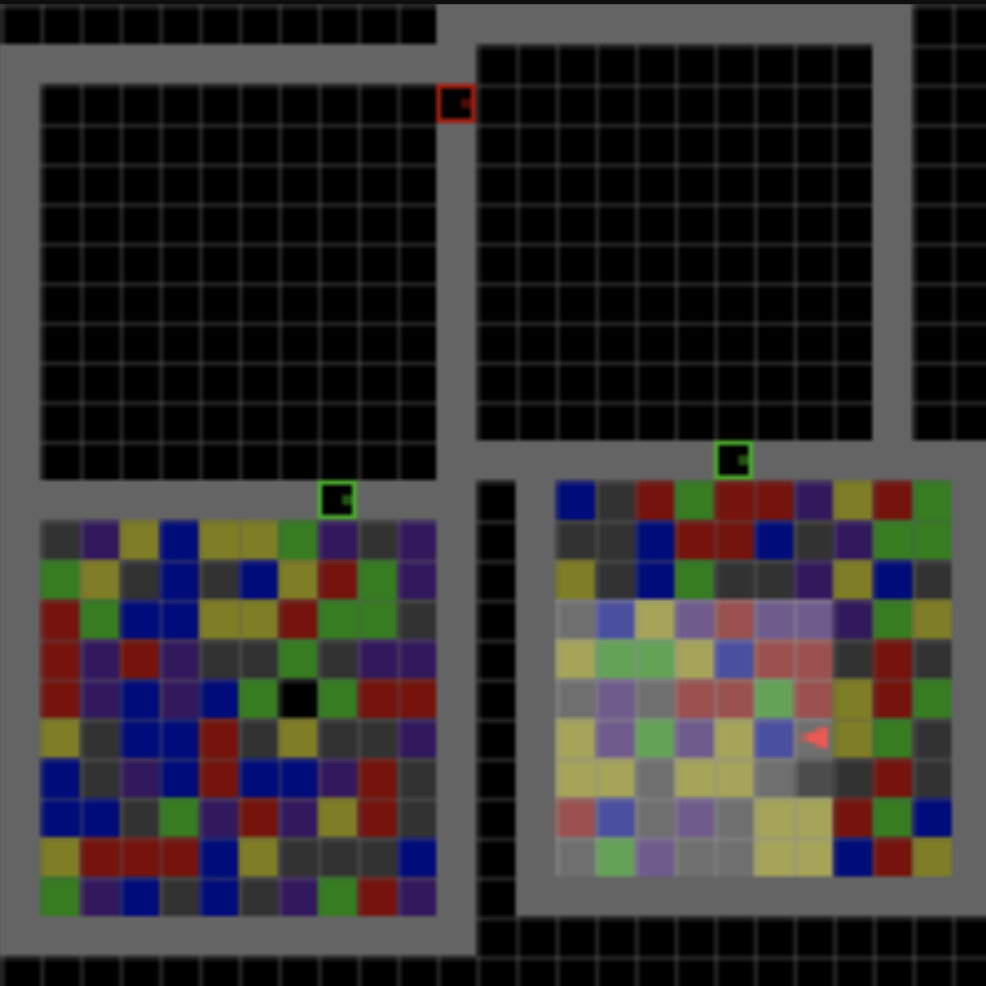}
  \caption{Environment}
  \label{fig:minienv}
\end{subfigure}\hfil%
\begin{subfigure}{.38\textwidth}
  \centering
  \includegraphics[width=\linewidth]{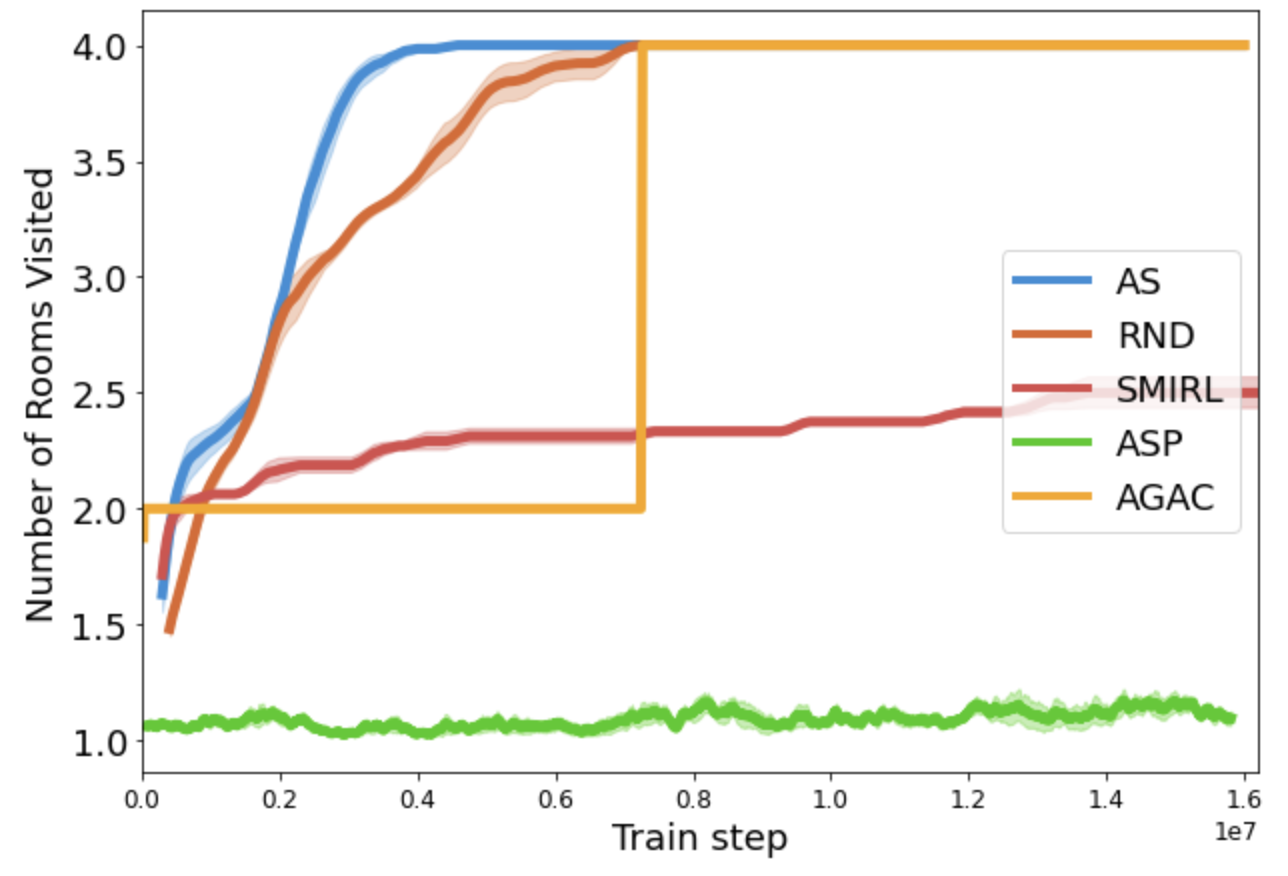}
  \caption{Cumulative exploration}
  \label{fig:explore_lifetime}
\end{subfigure}\hfil%
\begin{subfigure}{.38\textwidth}
  \centering
  \includegraphics[width=\linewidth]{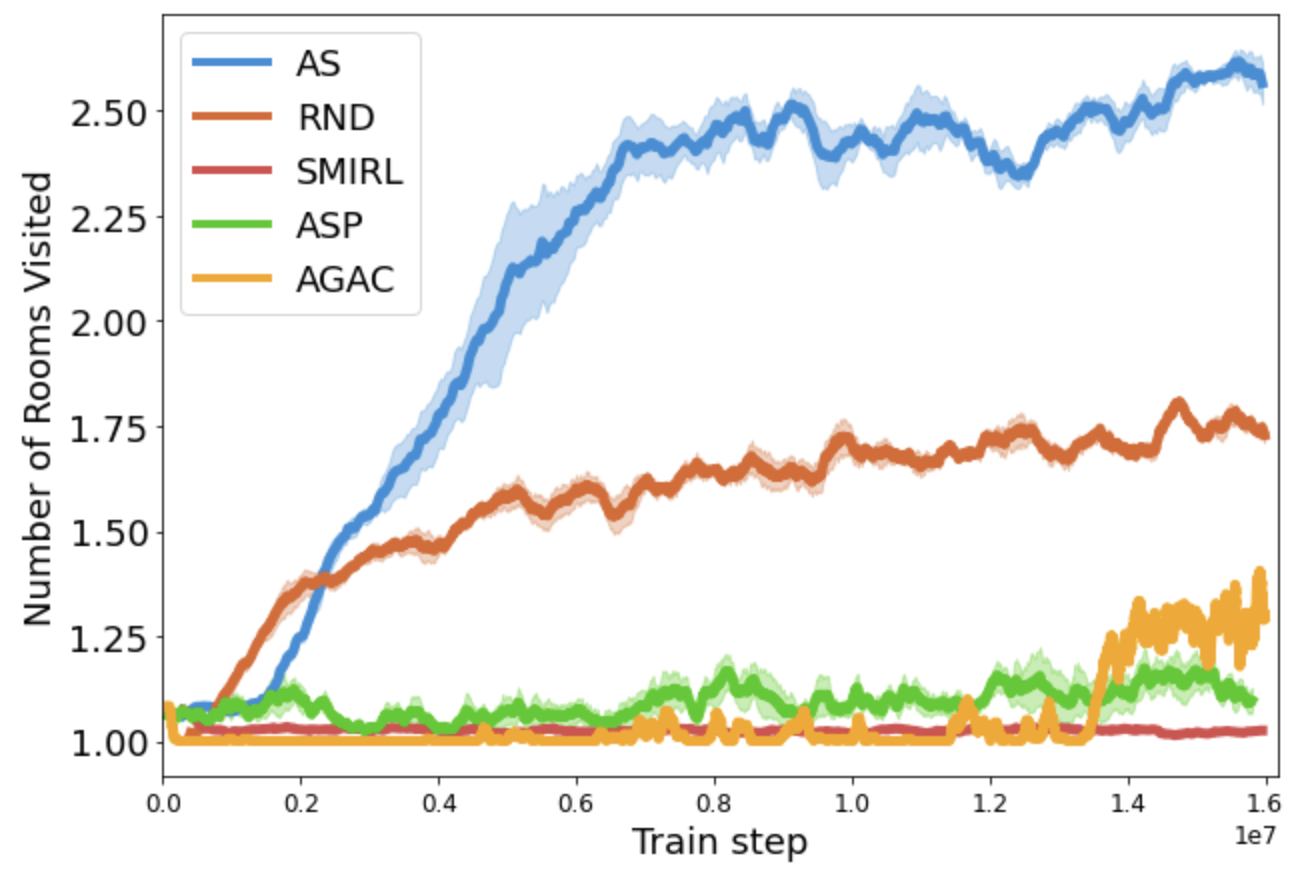}
  \caption{Exploration within an episode}
  \label{fig:explore_episode}
\end{subfigure}
\caption{\textbf{State coverage in stochastic BMDPs:} the number of rooms (out of 4) explored both (b) cumulatively over training, and (c) within an episode. 
AS explores better than the state-of-the-art exploration methods RND and ASP, which become distracted by noisy elements.
AS outperforms SMiRL, which minimizes observation surprise by turning to face the wall, or remains in dark rooms. }
\vspace{-0.4cm}
\label{fig:explore}
\end{figure}


\textbf{Q1: State coverage:} 
We measure state coverage in the procedurally-generated BMDPs using the number of rooms the agent visits (since many different observations can be generated from a single room). The results for all algorithms are shown in Figure \ref{fig:explore}. We measure coverage cumulatively, over the course of training (Figure \ref{fig:explore_lifetime}), to assess whether each method will lead the agent to collect experience from all possible states. This measure is relevant to whether the technique can be used as an effective exploration bonus to aid learning a downstream task. We also measure the number of rooms explored within each episode (Figure \ref{fig:explore_episode}). This allows us to assess whether the asymptotic policy learned by the algorithms continues to explore once it has converged. 

\begin{wrapfigure}{l}{0.4\textwidth}
\centering
\begin{subfigure}{.7\linewidth}
  \centering
  \includegraphics[width=\linewidth]{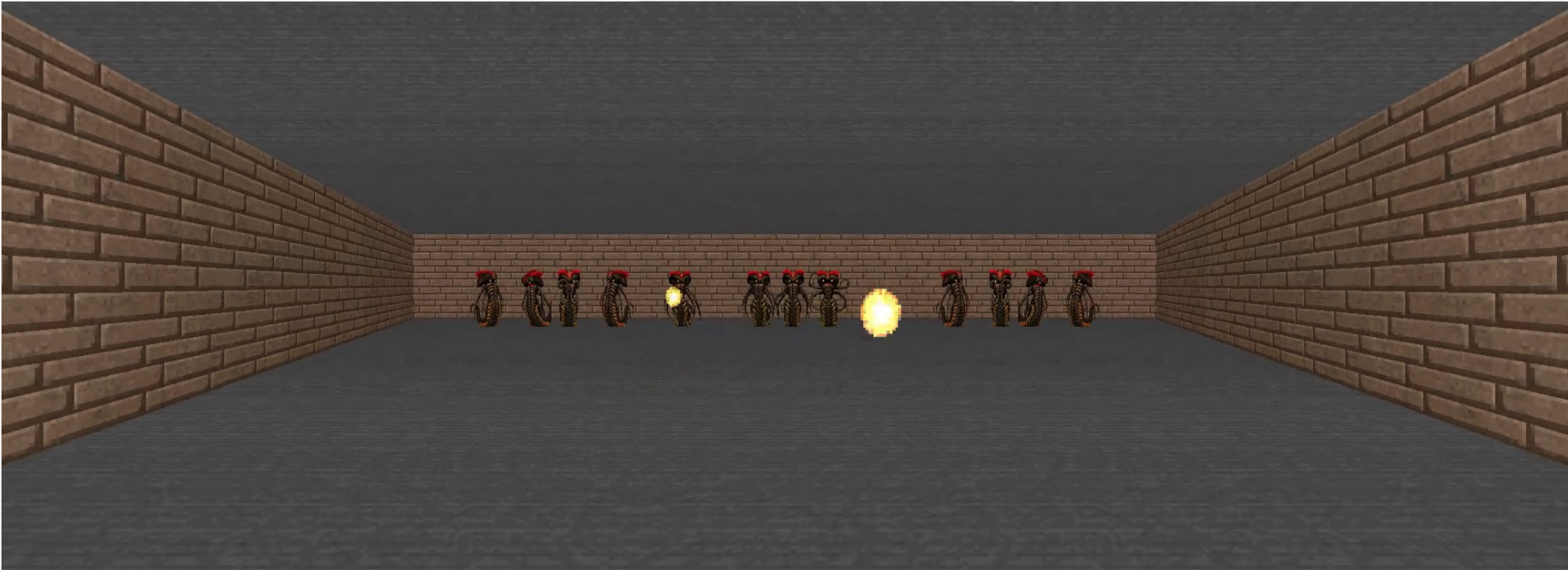}
  \caption{Doom environment}
  \label{fig:doom_env}
\end{subfigure}\hfil%

\begin{subfigure}{\linewidth}
  \centering
  \includegraphics[width=\linewidth]{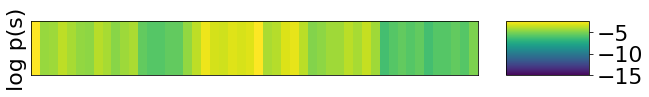}
  \caption{AS (ours)}
\end{subfigure}\hfil%

\begin{subfigure}{\linewidth}
  \centering
  \includegraphics[width=\linewidth]{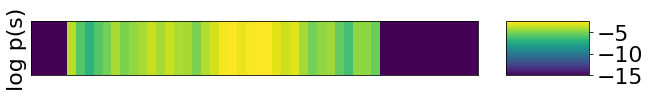}
  \caption{RND}
\end{subfigure}\hfil%

\begin{subfigure}{\linewidth}
  \centering
  \includegraphics[width=\linewidth]{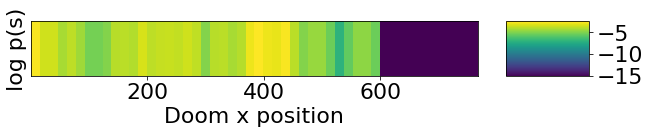}
  \caption{SMiRL}
\end{subfigure}\hfil%
\caption{\textbf{State coverage in Doom:} the heatmaps show $\log p(s)$ over the first 1000 training steps, where $s$ is the agent's $x$-axis position in Doom. 
Black areas indicate $p(s)=0$, meaning the agent has never visited these coordinates. 
These results show that AS fully explores the state space, while RND fails to explore both edges and SMiRL fails to explore the right edge.}
\label{fig:doom_coverage}
\vspace{-0.5cm}
\end{wrapfigure}

As predicted by our theoretical analysis, we see that AS learns to more fully explore the environments than prior work, visiting all possible rooms over a lifetime (Figure \ref{fig:explore_lifetime}) and significantly more rooms per episode (Figure \ref{fig:explore_episode}).
It learns more quickly and explores more thoroughly than RND, which becomes distracted by stochastic elements that lead to high prediction error.
Stochasticity also hinders learning
for ASP, since Alice can easily produce random goals that are difficult for Bob to replicate. Finally, we see that SMiRL, which is designed for fully observed environments, does not explore effectively because it suffers from the dark room problem -- 
it prefers to stay within the empty rooms, and not venture into rooms with high-entropy, stochastic elements. 

Figure \ref{fig:doom_coverage} shows coverage of the latent state space in the VizDoom  \textit{Take Cover} environment as positions along the x-axis. In this environment, agents navigate horizontally while seeing a partial view of the underlying state. Figure \ref{fig:doom_coverage} reveals that only AS covers all the positions in the map in the first 1000 train steps, whereas RND fails to explore the edges, and SMiRL fails to explore the right edge. 

\begin{figure}
\centering
\begin{subfigure}{.25\textwidth}
  \centering
  \includegraphics[width=\linewidth]{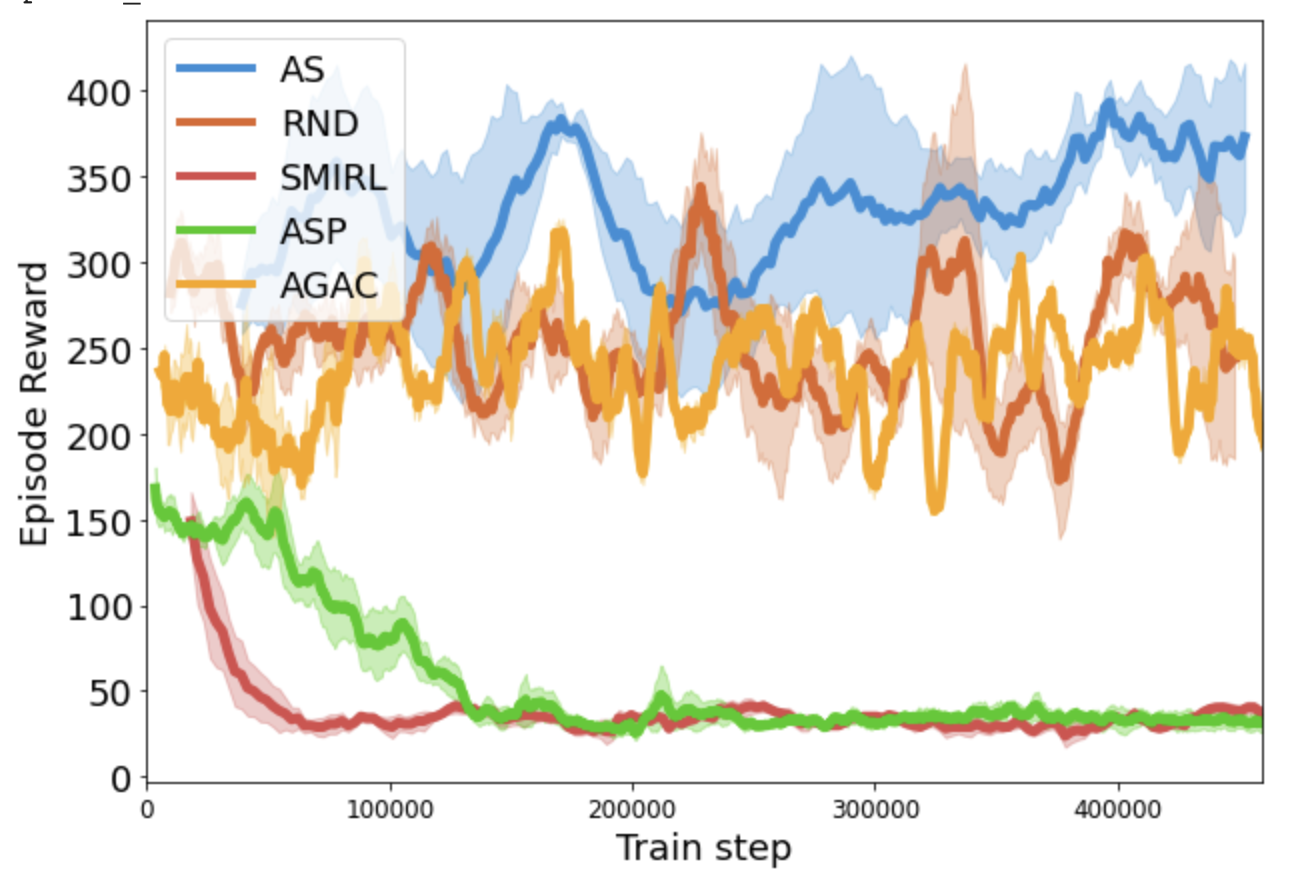}
  \caption{Assault}
\end{subfigure}\hfil%
\begin{subfigure}{.25\textwidth}
  \centering
  \includegraphics[width=\linewidth]{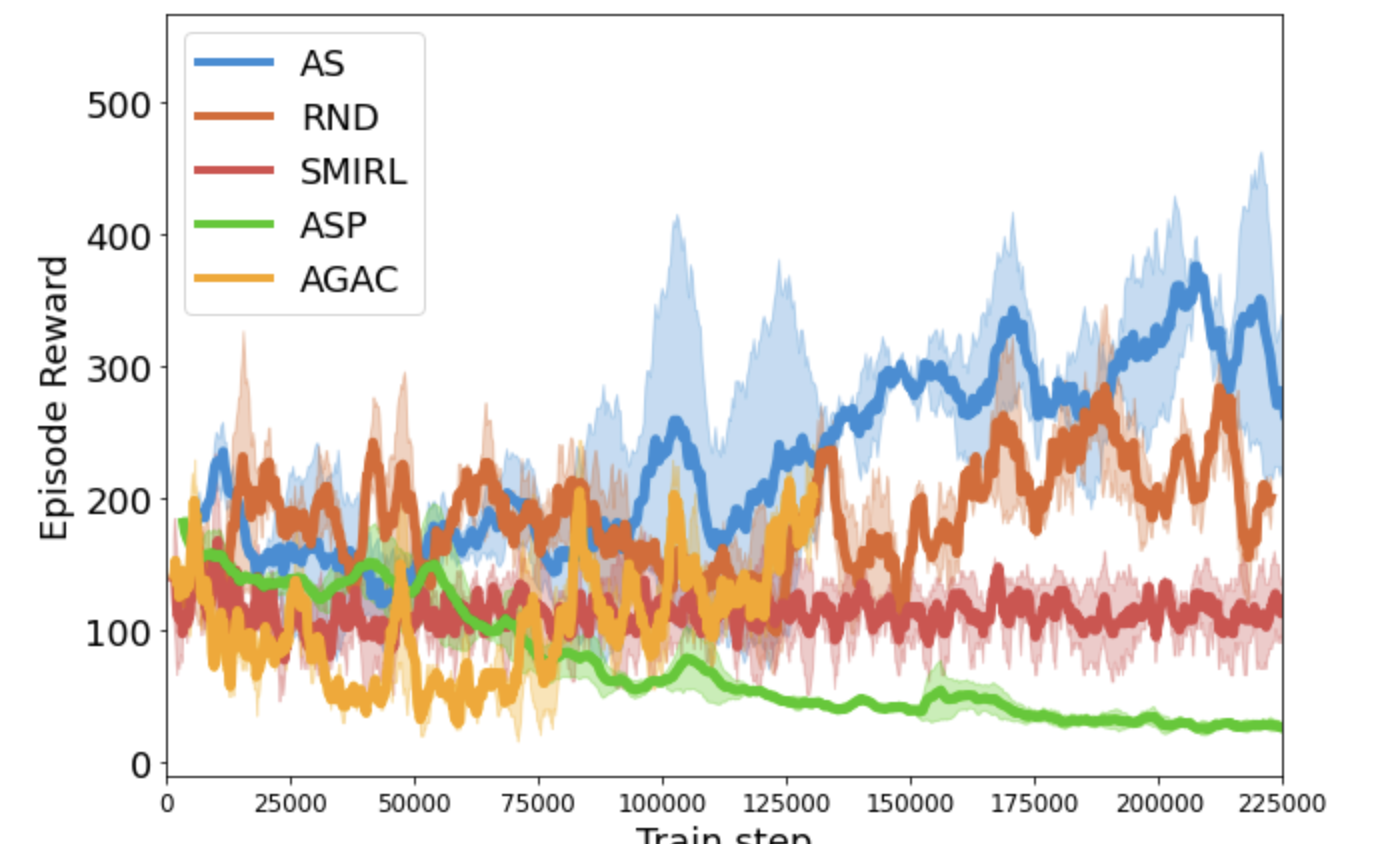}
  \caption{Berzerk}
\end{subfigure}\hfil%
\begin{subfigure}{.25\textwidth}
  \centering
  \includegraphics[width=\linewidth]{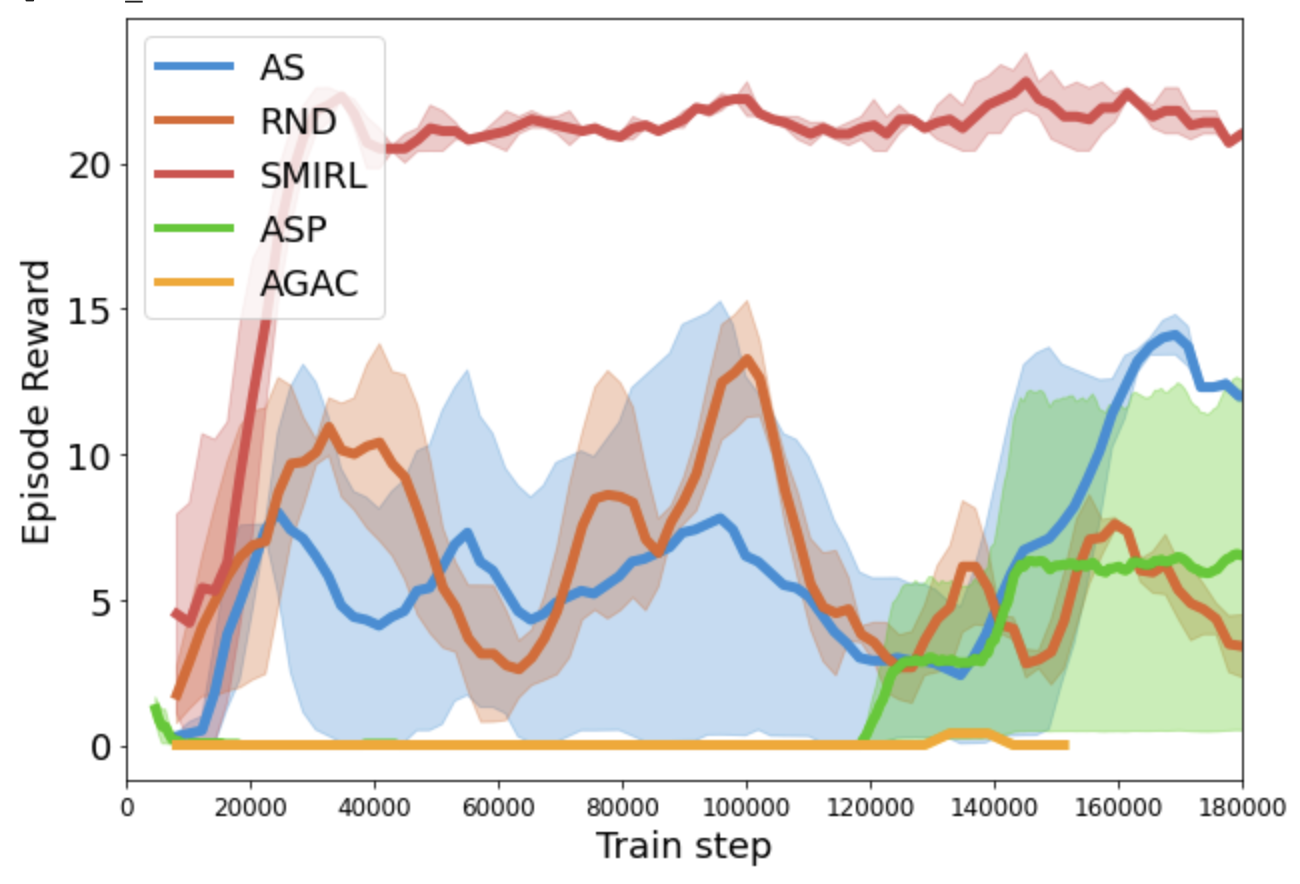}
  \caption{Freeway}
\end{subfigure}\hfil%
\begin{subfigure}{.25\textwidth}
  \centering
  \includegraphics[width=\linewidth]{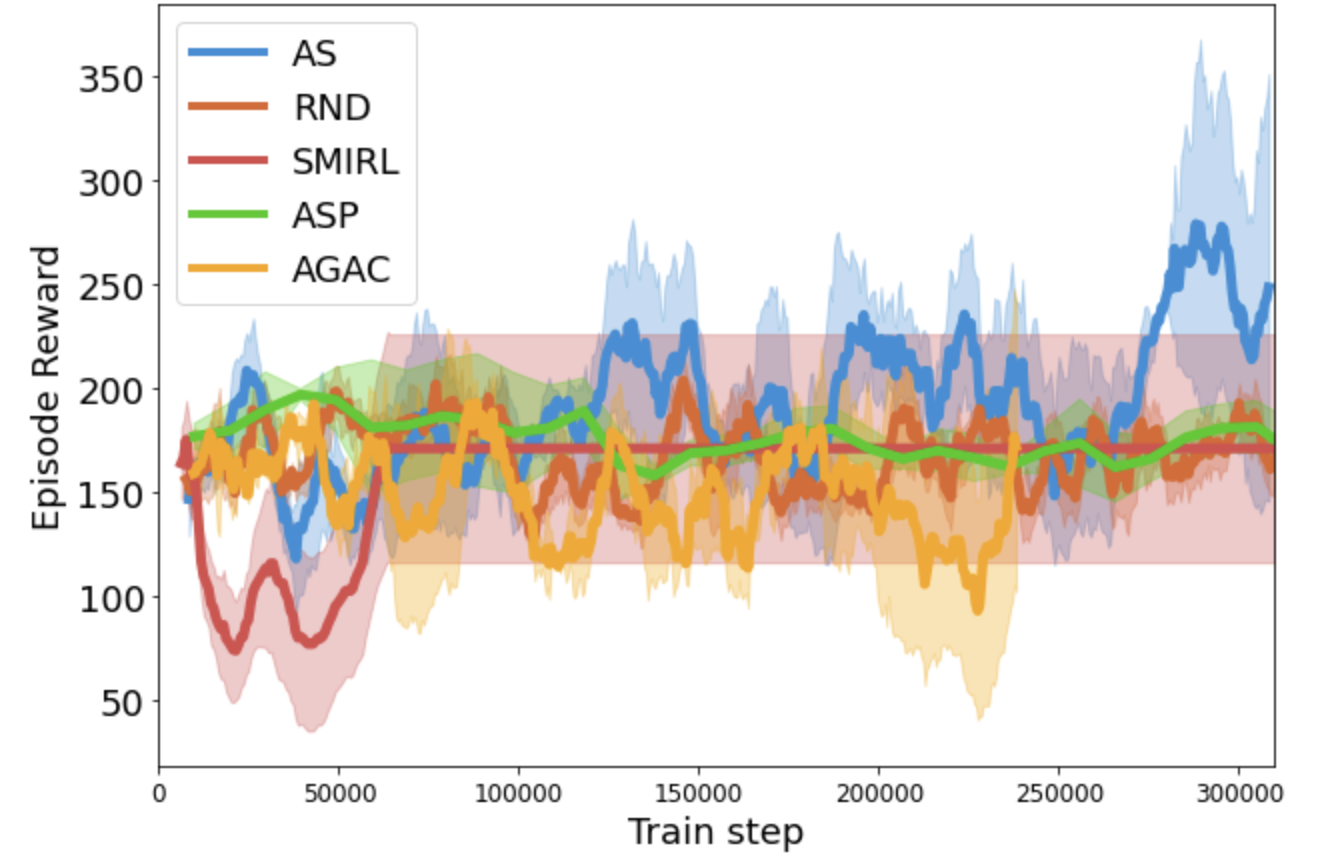}
  \caption{Space Invaders}
\end{subfigure}\hfil%
\caption{\textbf{Zero-shot transfer in Atari:} Each method is trained in Atari using only intrinsic reward, then transferred zero-shot to optimizing game reward. Plots show game reward, where error bars are the 95\% Confidence Interval (CI) of 5 seeds. The games reward behavior such as staying alive and shooting enemies, so obtaining higher reward indicates the agent has learned meaningful behaviors. Across 3/4 environments, AS outperforms RND, ASP, AGAC, and SMiRL, showing AS provides a general way to learn useful behaviors in the absence of external reward.}
\label{fig:atari}
\vspace{-0.7cm}
\end{figure}

\textbf{Q2. Zero-shot transfer:} 
Figures \ref{fig:atari} and \ref{fig:doom} show how each method performs when transferred zero-shot to the task of optimizing game reward in several Atari environments and VizDoom. Because the
\begin{wrapfigure}{r}{0.4\textwidth}
\vspace{-0.3cm}
\centering
    \includegraphics[width=\linewidth]{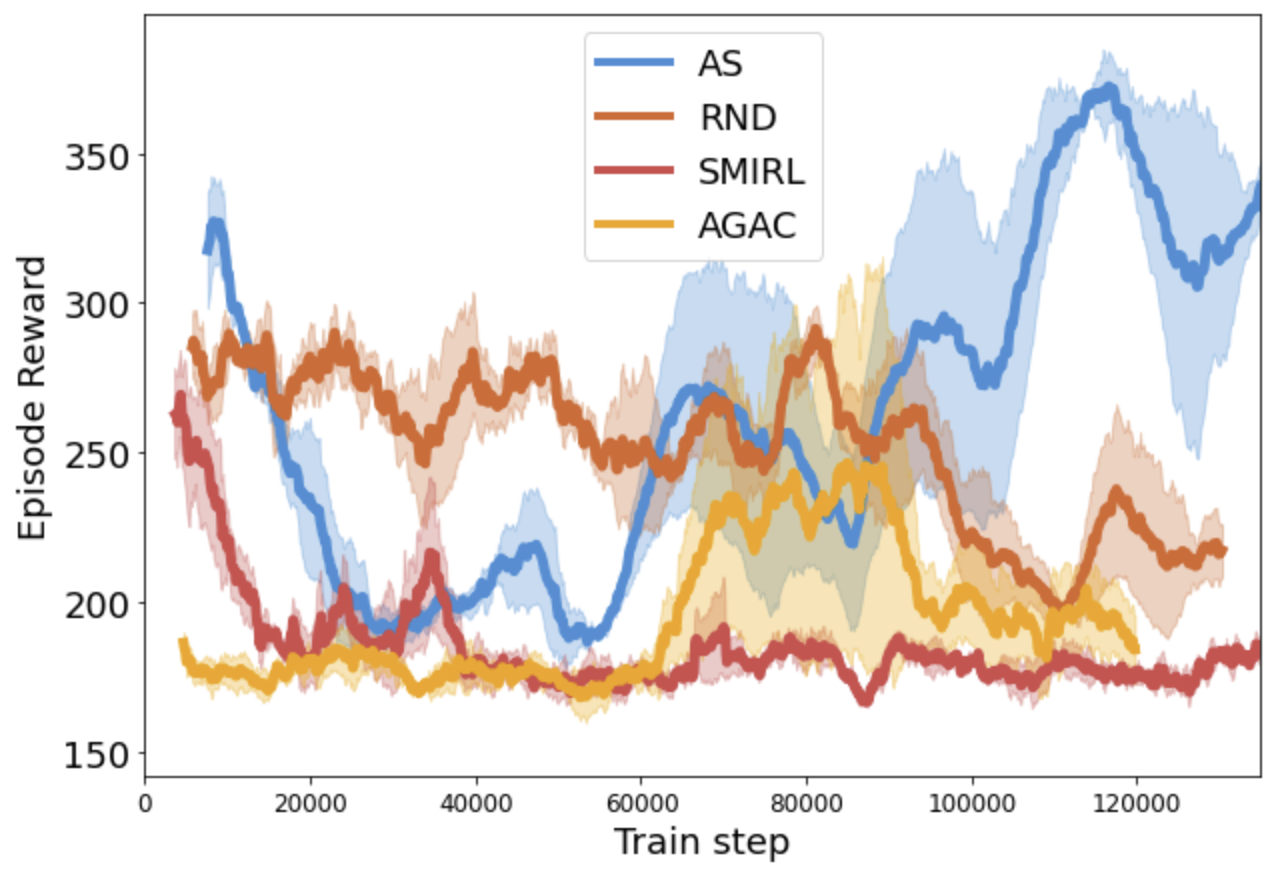}
  \caption{\textbf{Zero-shot transfer in Doom.} 
  Consistent with the Atari results, AS learns more meaningful behaviors (i.e. moving while avoiding enemy bullets) than other techniques. This leads to higher environment reward.}
  \label{fig:doom}
  \vspace{-0.2cm}
\end{wrapfigure}
game rewards complex behaviors like shooting or avoiding enemies, a high game reward indicates the agent has learned interesting skills, purely from optimizing the intrinsic 
objective.
Across the environments, AS performs better than RND, SMiRL, AGAC, and ASP. While RND is sometimes effective, its performance often decreases over time due the bonus from the prediction error shrinking as more states become familiar. Further, maximizing novelty in environments like Freeway, Space Invaders, and Doom can lead to the agent dying, corresponding to low reward. SMiRL performs best in Freeway, where minimizing entropy corresponds closely to staying alive and not being hit by cars. However, SMiRL performs poorly in the other environments because it avoids entropy by hiding from enemies (staying in dark rooms when they are available). ASP also performs poorly because it is possible for Alice to quickly reach states which Bob cannot easily replicate, preventing the algorithm from learning meaningful behaviors. For AGAC, the adversary cannot accurately predict the actions of the agent given an observation when the observation has a high entropy emission probability. In contrast, AS consistently obtains high returns across all environments, indicating that optimizing for both exploration and control provides a broadly useful inductive bias for learning interesting behaviors in the absence of external reward. 
\label{sec:noreward}

\textbf{Q3. Emergence of complexity:}
Due to space constraints, we show results for the emergence of a series of interesting behaviors in the BMDP environments in Appendix \ref{app:additional_results}. Figure \ref{fig:complexity} reveals that the multi-agent game induced by AS leads to a curriculum with multiple training phases: the Control agent first learns to go to a dark room, the Explore agent learns to go back to a noisy TV, and the Control agent responds by learning to \textit{lock a door} to make it more difficult for the Explore agent to expose it to surprising states. This 
highlights the potential of Adversarial Surprise to learn \textit{long-term} surprise-minimizing behaviors. Figure \ref{fig:control} demonstrates that RND and ASP do not learn to try taking even simple actions to control the environment, such as flipping a switch to stop stochastic elements from moving. In AGAC, the actor chooses actions that maximize the discrepancy between its policy and the Adversary's. Although this proves to be effective for exploration between subsequent trajectories, as shown in Figure \ref{fig:doom}, the interaction between the Control and Explore policies in Adversarial Surprise is able to facilitate more complex behaviors by not only finding novel states to explore but determining when to exhibit control over safe states.
The emergence of interesting behaviour in both Atari and Vizdoom is evidenced in the videos available at: \url{https://sites.google.com/corp/view/adversarialsurprise/home}. Purely through optimizing the Adversarial Surprise objective, AS agents learn complex behaviors such as hiding from enemies behind walls or barriers, shooting enemies in Space Invaders, Berzerk, Assault, and Doom, and hopping across the road in Freeway. 

\vspace{-0.2cm}
\section{Discussion}
\vspace{-0.1in}
We proposed Adversarial Surprise as a general approach for unsupervised RL. Adversarial Surprise corresponds to a two-player adversarial game, in which two policies compete over the amount of surprise, or observation entropy, that an agent experiences. Reminiscent of Dr. Jekyll and Mr. Hyde, the Explore policy acts to expose the Control policy to highly entropic states from which it must recover by learning to manipulate the environment. We show that AS produces increasingly complex control and exploration strategies, and is able to address exploration in stochastic Block MDPs. In such environments, prior methods can become distracted by noisy elements, or suffer from the dark room problem.
We show both theoretically and empirically that AS is robust against these issues, and learns to explore the environment more thoroughly, and control it more effectively, than state-of-the-art prior works like RND, ASP, AGAC, and SMiRL. 


\textbf{Future work:}
Our evaluation of AS focuses on coverage and unsupervised exploration, where we demonstrate that AS improves over novelty-seeking, surprise minimization, adversarial, and goal-setting methods in stochastic BMDPs and standard benchmarks like Atari and Doom. However, the potential value of unsupervised RL methods extends more broadly: such methods could be used to acquire skills for downstream task learning, controlling an environment to reach states from which more behaviors could be performed successfully. Future work could study how AS and its extensions could enhance applications like robotics, for example by collecting data for downstream reward-guided learning. Further, we see a potentially exciting method which combines AS with hierarchical RL, by training a meta-policy to select when to invoke the Explore and Control sub-policies. In this way, the meta-policy could explicitly decide when to explore and when to exploit.

\newpage
\bibliography{iclr2022_conference}
\bibliographystyle{iclr2022_conference}

\clearpage

\newpage

\appendix



\section{Proof details}
\label{app:proof}
Firstly, we notice that we have a simple relation between marginal observation entropy and marginal state entropy by the structure of the POMDP:
\begin{align}
    d^{\pi}(o) & = (1-\gamma)\sum_{t} \gamma^t p(o_t=o) \\
    & = (1-\gamma)\sum_{t} \gamma^t \sum_{s} p(s_t=s)p(o|s) \\
    & = (1-\gamma)\sum_{s} p(o|s) \sum_{t} \gamma^t p(s_t=s) \\
    & = \sum_{s} p(o|s) d^{\pi}(s)
\end{align}
We can use this relation to prove the following lemma:
\begin{lemma}
The cumulative surprise measured by the observation density model $p_{\theta}(o)$ forms an upper bound of the observation marginal entropy $H(d^{\pi}(o))$, which becomes tight when the observation density model fits the observation marginal $d^{\pi}(o)$. 
\end{lemma}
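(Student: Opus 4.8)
The plan is to reduce the claim to Gibbs' inequality, i.e. the nonnegativity of the KL divergence. The natural starting point is the occupancy-measure identity derived immediately above the lemma, $d^{\pi}(o) = (1-\gamma)\sum_{t} \gamma^t p(o_t=o)$, which expresses the normalized observation marginal as a discounted mixture of the per-timestep observation distributions. The whole argument then amounts to recognizing the cumulative surprise as a cross-entropy against $d^{\pi}(o)$ and splitting that cross-entropy into entropy plus KL.

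First I would interchange the expectation and the sum over timesteps and collapse the time index using the occupancy identity:
$$-\mathbb{E}_{\pi}\sum_{t=0}^{\infty}\gamma^t \log p_{\theta}(o_t) = -\sum_{t}\gamma^t \sum_o p(o_t=o)\log p_{\theta}(o) = -\frac{1}{1-\gamma}\sum_o d^{\pi}(o)\log p_{\theta}(o).$$
The right-hand side is, up to the positive constant $\tfrac{1}{1-\gamma}$, precisely the cross-entropy between the true observation marginal $d^{\pi}(o)$ and the model $p_{\theta}$. Next I would apply the standard decomposition
$$-\sum_o d^{\pi}(o)\log p_{\theta}(o) = H(d^{\pi}(o)) + \KL\!\left(d^{\pi}(o)\,\|\,p_{\theta}\right),$$
and invoke $\KL(d^{\pi}(o)\,\|\,p_{\theta}) \geq 0$, with equality if and only if $p_{\theta}=d^{\pi}(o)$. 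Since the omitted constant $\tfrac{1}{1-\gamma}$ is strictly positive, it preserves both the direction of the inequality and the tightness condition, giving $-\mathbb{E}_{\pi}\sum_{t}\log p_{\theta}(o_t) \geq H(d^{\pi}(o))$, tight exactly when the density model fits the observation marginal.

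The argument is essentially routine, so the only care needed is bookkeeping. I must confirm that $d^{\pi}(o)$ is a genuine probability distribution, which it is by the $(1-\gamma)$ normalization in the occupancy identity, so that Gibbs' inequality legitimately applies; and I should be explicit about the discount factor $\gamma^t$ and the resulting $\tfrac{1}{1-\gamma}$ factor, which the informal lemma statement suppresses. For continuous observations (e.g.\ the Gaussian pixel model) the sums become integrals and $H$ a differential entropy, but nonnegativity of KL is unchanged, so the same conclusion holds verbatim. The main conceptual step, if there is one at all, is simply the identification of the discounted cumulative surprise with a cross-entropy under the occupancy measure; once that identity is in hand, nonnegativity of the KL divergence does all of the remaining work.
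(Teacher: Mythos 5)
Your proof is correct and follows essentially the same route as the paper's: both arguments identify the expected surprise under the occupancy measure as the cross-entropy $-\sum_o d^{\pi}(o)\log p_{\theta}(o)$ and conclude via $\KL(d^{\pi}(o)\,\|\,p_{\theta}) \geq 0$, with tightness exactly when $p_{\theta}=d^{\pi}(o)$. The only cosmetic difference is that the paper collapses the expectation through the state marginal $d^{\pi}(s)$ and emission $p(o|s)$ while you collapse the time sum directly, and your explicit handling of the $\gamma^t$ weights and the $\tfrac{1}{1-\gamma}$ factor is slightly more careful bookkeeping than the paper's, which suppresses the discounting.
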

\begin{proof}
\begin{align}
    -\mathbb{E}_{\pi}\log p_{\theta}(o) & = -\sum_{s}d^{\pi}(s)\sum_{o}p(o|s)\log p_{\theta}(o) \\
    & = -\sum_{o}d^{\pi}(o)\log p_{\theta}(o) \\
    & \geq H(d^{\pi}(o))
\end{align}


where the last inequality is because $-\sum_{o}d^{\pi}(o)\log p_{\theta}(o) - H(d^{\pi}(o)) = KL(d^{\pi}(o)||p_{\theta}(o)) \geq 0$
\end{proof}

We suppose that we are in the Block MDP setting:
\begin{assumption}[Block MDP]
We suppose that for any $s,s' \in S$, $s \neq s' \Rightarrow supp(p(O|s)) \cap  supp(p(O|s)) = \emptyset$
\label{assumption:bmdp2}
\end{assumption}

In this case the marginal observation entropy can also be simply related to the marginal state entropy:
\begin{lemma}
In a block MDP (BMDP) \cite{misra2020kinematic}, by noticing that $H(S|O) = 0$, we can decompose the observation marginal entropy as follows:
\begin{align}
    H(d^{\pi}(o)) = \mathbb{E}_{d^{\pi}(s)}H(O|S=s) + H(d^{\pi}(s)) 
\end{align}
\end{lemma}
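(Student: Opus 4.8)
The plan is to introduce the joint distribution of latent state and observation induced by the policy, $d^{\pi}(s,o) = d^{\pi}(s)\, p(o|s)$, and to apply the chain rule for Shannon entropy to the joint entropy $H(S,O)$ in two different orders. Conditioning on the state first gives
\begin{align}
    H(S,O) = H(d^{\pi}(s)) + \mathbb{E}_{d^{\pi}(s)} H(O|S=s),
\end{align}
since the conditional law of $O$ given $S=s$ is exactly the emission distribution $p(O|S=s)$, whose entropy is the emission entropy. Conditioning on the observation instead gives
\begin{align}
    H(S,O) = H(d^{\pi}(o)) + H(S|O),
\end{align}
where the observation marginal is the one already derived at the start of the appendix, $d^{\pi}(o) = \sum_{s} p(o|s)\, d^{\pi}(s)$. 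Equating the two expansions reduces the entire claim to establishing that $H(S|O) = 0$.

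This last identity is where the Block MDP structure enters, and it is the crux of the argument. By the disjointness assumption (Assumption \ref{assumption:bmdp2}), the emission supports $\text{supp}(p(O|s))$ are pairwise disjoint across latent states $s$. Hence every observation $o$ in the support of $d^{\pi}(o)$ is emitted by exactly one latent state, so the posterior $d^{\pi}(s|o)$ is a point mass concentrated on that state. A point mass has zero entropy, so $H(S|O) = \mathbb{E}_{d^{\pi}(o)} H(d^{\pi}(s|o)) = 0$. Substituting this back and rearranging the equated expansions yields
\begin{align}
    H(d^{\pi}(o)) = \mathbb{E}_{d^{\pi}(s)} H(O|S=s) + H(d^{\pi}(s)),
\end{align}
which is the desired decomposition.

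I expect the only delicate point to be the justification that $H(S|O)=0$: one must confirm that disjoint emission supports induce an almost-surely deterministic decoding from any visited observation back to its generating state, so that the posterior over states given that observation is degenerate. Everything else is the standard entropy chain rule together with the marginalization identity $d^{\pi}(o) = \sum_{s} p(o|s)\, d^{\pi}(s)$ established earlier, so no additional machinery is required. (If observations are continuous, the term $H(O|S=s)$ and $H(d^{\pi}(o))$ should be read as differential entropies while $H(S|O)$ remains the discrete conditional entropy of the latent state; the chain-rule bookkeeping is unchanged, and the decoding argument still forces $H(S|O)=0$.)
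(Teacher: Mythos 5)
Your proof is correct, but it takes the alternative route that the paper only mentions in a remark, rather than the one it actually writes out. The paper's displayed proof is a direct computation: it expands $H(d^{\pi}(o))$ as a double sum over pairs $(s,o)$ and invokes disjointness at exactly one point --- for $o \in \mathcal{O}_s$, the support of $p(O|s)$, the marginal collapses, $d^{\pi}(o) = \sum_{s'} p(o|s')\,d^{\pi}(s') = p(o|s)\,d^{\pi}(s)$ --- so the logarithm inside the entropy splits as $\log p(o|s) + \log d^{\pi}(s)$, and regrouping the sums yields the two terms of the decomposition. You instead apply the entropy chain rule to the joint $d^{\pi}(s,o)$ in both orders and reduce the whole claim to $H(S|O)=0$, justified by the point-mass-posterior argument; the paper explicitly acknowledges this as an equivalent derivation immediately after its proof (it notes the identity can also be obtained by observing $H(S|O)=0$ and writing the mutual information between $S$ and $O$), and the lemma statement itself gestures at it. Both arguments rest on the same fact --- disjoint emission supports make the observation-to-state decoding almost surely deterministic --- but they package it differently: the paper's direct computation is self-contained and shows precisely where disjointness enters the algebra (and hence where the identity would fail in a general POMDP), while your chain-rule version is more modular, isolates $H(S|O)=0$ as the single Block-MDP-specific ingredient, and extends more cleanly to continuous observations, as your closing remark about mixing differential entropies for $O$ with the discrete conditional entropy of $S$ correctly handles. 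One incidental point in your favor: the paper's displayed computation drops the minus signs in its entropy expressions (a typo carried through the chain of equalities), whereas your bookkeeping is sign-correct.
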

\begin{proof}
\begin{align}
    H(d^{\pi}(o)) & = \sum_{o,s}p(o|s)d^{\pi}(s) \log (\sum_s p(o|s)d^{\pi}(s)) \\
    & = \sum_s d^{\pi}(s) \sum_{o \in \mathcal{O}_s} p(o|s) \log (p(o|s)d^{\pi}(s)) \\
    & = \sum_s d^{\pi}(s) [\sum_{o \in \mathcal{O}_s} p(o|s) \log p(o|s) + \log d^{\pi}(s)] \\
    & = \mathbb{E}_{d^{\pi}(s)}H(O|S=s) + H(d^{\pi}(s)) \label{eq:obs_entrop}
\end{align}
\end{proof}
that we can also obtain by simply noticing that $H(S|O) = 0$ in the Block MDP setting and writing the mutual information between $S$ and $O$.

Suppose that we have a small number of latent states with rich observations, that is, there is $s$ such that $H(O|S=s) \gg \log|\mathcal{S}|$. In this case, if we are trying to maximize the marginal observation entropy, we have:
\begin{align}
    \max_{d^{\pi}(s)} H(d^{\pi}(o)) \approx \max_{d^{\pi}(s)} \mathbb{E}_{d^{\pi}(s)}H(O|S=s)
\end{align}
The RHS is maximized by taking:
\begin{align}
    d^{\pi}(s) = I(s = \argmax H(O|S=s))
\end{align}
That is, we are stuck in a noisy TV.

On the contrary, if we are trying to minimize the marginal observation entropy, equation Eq.\ref{eq:obs_entrop} gives us the exact minimizer:
\begin{align}
    d^{\pi}(s) = I(s = \argmin H(O|S=s))
\end{align}
That is, we are stuck in a dark room.

Now suppose that we are optimizing the following objective:
\begin{align}
    \max_{d^{\pi_A}(s_0)} \min_{d_{1:T}^{\pi_B}(s|s_0)}  H(d_{1:T}^{\pi_B}(o))
\end{align}

\begin{definition}
We define the following semiquasimetric in the state space:
\begin{align}
\tilde{d}(s,s') & = \min\{k: \exists \pi, P^{\pi}_k(s'|s)=1\}
\end{align}
\end{definition}
where by convention $P^{\pi}_0(s|s)=1$ for all $s$. In other words, $d(s,s')=k$ if state there is a policy that reaches $s'$ from $s$ in $k$ steps with probability 1.

\begin{definition}
We define the following semimetric:
\begin{align}
d(s,s') & = \max\{\tilde{d}(s,s'), \tilde{d}(s',s)\}
\end{align}
\end{definition}

\begin{definition}
We say that a state $s$ is a dark room if it has minimal emission entropy:
\begin{align}
    H(O|S=s) = \min_{s \in S} H(O|S=s)
\end{align}
\end{definition}

\begin{assumption}
We make three assumptions concerning the density of dark rooms:
\begin{enumerate}[(a)] 
\item We suppose that for every state $s$, there is a dark room such that $d(s,s') \leq T$. That is, the set of dark rooms is a $T$-cover of the state space with respect to $d$.
\item We suppose that for every state $s$, there is a dark room such that $P^{\pi}_T(s'|s)=1$, that is a dark room can be reached in exactly $T$ steps.
\item We suppose that for any state $s$ and any dark room $s'$, if $\tilde{d}(s,s')\leq T$, then $d(s,s') \leq T$, that is if we can reach a dark room from a state $s$ in less than $T$ steps, then we can also reach $s$ from this dark room is less than $T$ steps.
\end{enumerate}
\label{assumption:cover2}
\end{assumption}

\begin{theorem}
Under Assumptions \ref{assumption:bmdp2} and \ref{assumption:cover2}, the Markov chain induced by the following AS game:
\begin{align}
    \max_{d^{\pi_A}(s_0)} \min_{d_{1:T}^{\pi_B}(s|s_0)}  H(d_{T}^{\pi_B}(o))
\end{align}
$T$-covers the state space, that is for every state $s$, there is a state $s'$ such that $d^{\pi}(s')>0$ and $d(s,s') \leq T$, where $d^{\pi}$ is the marginal induced by the game between $A$ and $B$.
\end{theorem}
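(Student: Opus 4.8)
The plan is to collapse the two‑player observation‑entropy game into a pure latent‑state coverage statement by applying the decomposition of Lemma~\ref{lemma:observation_entropy} at time $T$, and then to read off the $T$‑cover property from the density assumptions on dark rooms. First I would fix an arbitrary starting distribution $\mu = d^{\pi_A}(s_0)$ selected by the Explore player and study the Control player's best response $\min_{d^{\pi_B}_{1:T}} H(d_T^{\pi_B}(o))$. By Lemma~\ref{lemma:observation_entropy}, and under the BMDP disjointness of Assumption~\ref{assumption:bmdp2}, this objective splits as $\mathbb{E}_{d_T(s)}[H(O\mid S=s)] + H(d_T(s))$: an emission term plus a latent state‑entropy term. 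The emission term is bounded below by the minimal emission entropy $h_{\min}$, with equality exactly when the time‑$T$ state distribution is supported on dark rooms, and Assumption~\ref{assumption:cover2}(b) makes this attainable, since from every state some dark room is reachable in exactly $T$ steps with probability one. So the first step is to show that the Control player's optimum routes all mass onto dark rooms, pinning the emission term to the constant $h_{\min}$.

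\textbf{Reduction to state entropy.} Once the emission term is fixed at $h_{\min}$, the game value at any $\mu$ reduces to $h_{\min} + \min_{M} H(M\mu)$ over dark‑room‑valued $T$‑step transitions $M = P^\pi_T$, so the outer maximization over $\mu$ becomes a max‑min over the state‑entropy term alone, recovering the reduction $\mathcal{J}^E \approx H(d^\pi(s))$ quoted in the main‑text sketch. I would then argue that the Explore player, maximizing $H(d_T(s))$ against a Control player who tries to collapse it, is forced to choose a $\mu$ whose induced time‑$T$ dark‑room distribution has maximal support: any dark room reachable from the Explore‑reachable set must carry positive mass at the saddle point, for otherwise the Explore player could strictly raise $H(d_T(s))$ by shifting starting mass onto a state that deterministically forces that dark room, a move the Control player cannot counter without leaving the dark‑room set and paying in the emission term. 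This yields that $\mathrm{supp}(d^\pi)$ contains a $T$‑net of dark rooms.

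\textbf{From dark‑room support to a $T$‑cover.} The last step is purely metric. Take any latent state $s$. By Assumption~\ref{assumption:cover2}(a) there is a dark room $s'$ with $d(s,s') \le T$; Assumption~\ref{assumption:cover2}(c) is what lets me pass from the directional reachability used by the Control player (forward‑reachable within $T$, i.e. $\tilde d \le T$) to the symmetric distance $d$ in which the cover is phrased, so the dark rooms the Control player actually reaches genuinely cover the states generating them. Combining this with the full dark‑room support established above, every $s$ lies within symmetric distance $T$ of some $s' \in \mathrm{supp}(d^\pi)$, which is exactly the claimed $T$‑cover.

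\textbf{Main obstacle.} The delicate point is the reduction: the term $H(d_T(s))$ enters both players' objectives with opposite signs, so it does not cancel, and the phrase ``when the Control policy is factored out'' conceals a genuine min‑max argument. I expect the hard part to be proving that the Control player cannot minimize both entropy terms at once — that forcing the emission term to $h_{\min}$ (dark rooms only) leaves it unable to also drive $H(d_T(s))$ below the level the Explore player can guarantee — and that the resulting saddle‑point support still $T$‑covers. This is made harder by the non‑convexity of the feasible set $\{P^\pi_T : \pi\}$ of $T$‑step transition kernels, which blocks a clean vertex/LP characterization of the Control optimum and forces the argument to rely on the combinatorial reachability structure encoded in $\tilde d$ rather than on convex‑analytic tools.
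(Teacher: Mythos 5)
Your proposal follows essentially the same route as the paper's own proof: apply the entropy decomposition of Lemma~\ref{lemma:observation_entropy} at time $T$, use Assumption~\ref{assumption:cover2}(b) to argue the Control player routes all mass onto dark rooms so the emission term is pinned at its constant minimum, reduce the game to maximizing state marginal entropy over dark-room-supported distributions whose optimum has full dark-room support, and obtain the $T$-cover from Assumption~\ref{assumption:cover2}(a), with (c) handling the passage from one-directional reachability $\tilde d$ to the symmetric semimetric $d$. The one point where you go beyond the paper --- flagging the min--max entanglement of $H(d_T(s))$ (which the two players optimize with opposite signs) as the main obstacle and sketching an exchange argument at the saddle point --- concerns a step the paper's proof simply asserts (``the game is equivalent to the constrained objective''), so you are, if anything, more explicit about the same unresolved subtlety; just note that your exchange move presupposes a state from which the Control player is \emph{forced} into one specific dark room in exactly $T$ steps, which the stated assumptions do not guarantee, mirroring the paper's implicit feasibility assumption that the Explore player can realize any distribution over dark rooms.
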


\begin{proof}

Given an initial state $s_0$, Eq.\ref{eq:obs_entrop} shows that the controller will always reach the same state of lowest emission entropy at step $T$. By assumption \ref{assumption:cover2}(b) the Controller can always reach a dark room with probability 1 in exactly T steps. Therefore the game is equivalent to the following constrained objective:
\begin{align}
    &\max_{d^{\pi}(s)} \mathbb{E}_{d^{\pi}(s)}H(O|S=s) + H(d^{\pi}(s)) \\
    & \text{s.t.  } d^{\pi}(\{s: H(O|S=s) = \min_{s} H(O|S=s)\}) = 1
\end{align}
For any state marginal satisfying the constraint we have:
\begin{align}
    &\mathbb{E}_{d^{\pi}(s)}H(O|S=s) = C
\end{align}
where $C$ is a constant. Therefore, with probability 1, maximizing this objective is equivalent to maximizing the state marginal entropy in the set of dark rooms which form a $T$-cover of the state space by assumption \ref{assumption:cover2}(a). Therefore the Markov chain induced by the game $T$-covers the state space. Indeed, suppose by contrapositon that this is not the case. That is, there is $s$ such that for any $s'$ we have:
\begin{align}
    d^{\pi}(s') = 0 \vee d(s,s')>T
\end{align}
Since the set of dark rooms is a T-cover by assumption \ref{assumption:cover2}(a), we know that there is a dark room $s''$ such that $d(s,s'')\leq T$, which implies that $d^{\pi}(s'') = 0$. Therefore the state marginal entropy in the set of dark rooms is not maximized and the objective is not optimized.




\end{proof}



\section{Additional experimental results}
\label{app:additional_results}
\subsection{Emergence of Complexity}
\begin{wrapfigure}{l}{0.5\textwidth}
\centering
\vspace{-0.1cm}
  \includegraphics[width=\linewidth]{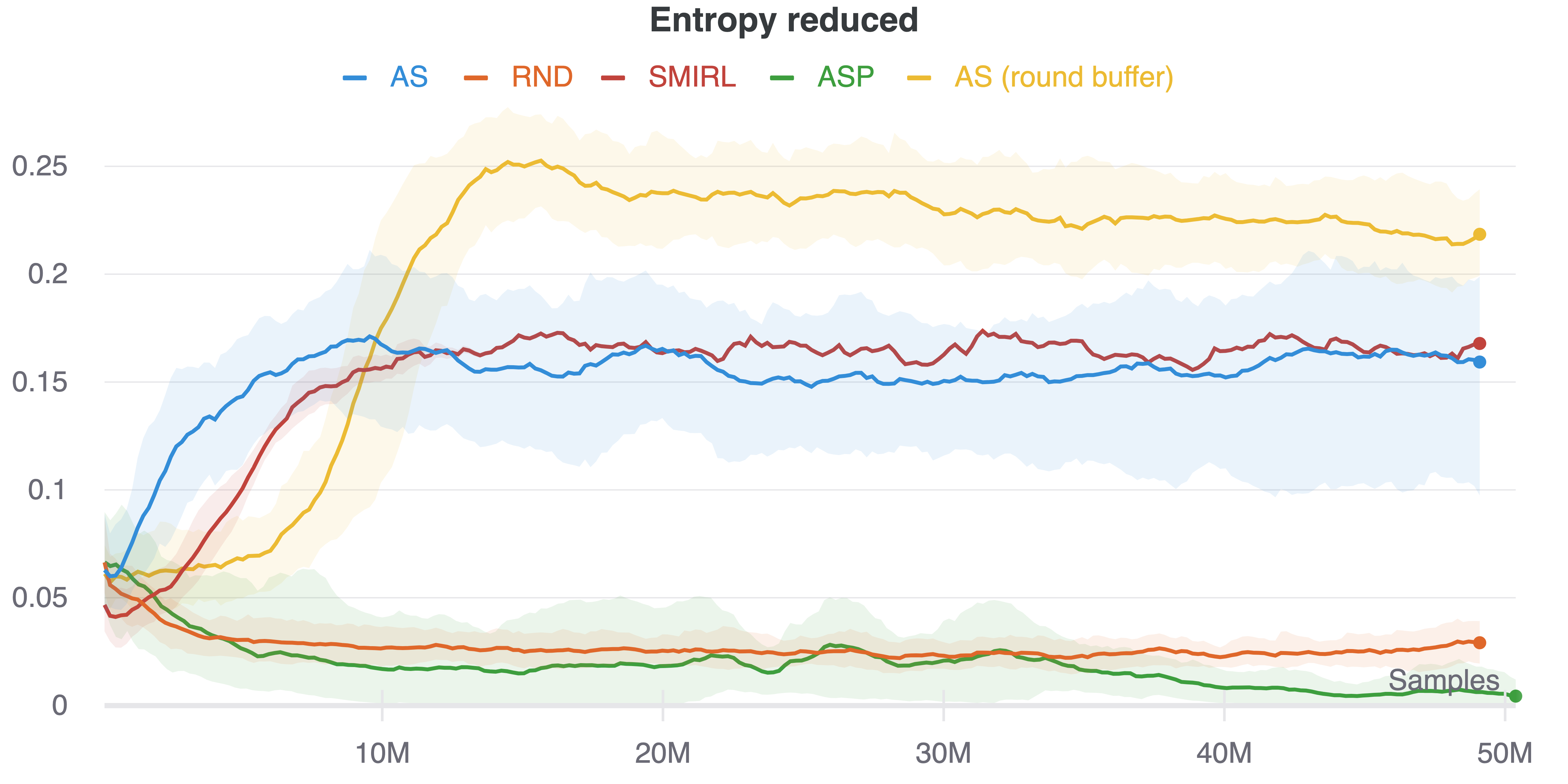}
\caption{\textbf{Q2. Emergence of Complexity:} the average number of times the agent flip a switch to stop lights from flashing. 
ASP and RND do not learn to press the switch, while SMiRL and AS both press the switch a similar number of times. Resetting the AS buffer more frequently enables it to exceed even SMiRL in taking actions to control the environment. 
\vspace{-0.2cm}
}
\label{fig:control}
\end{wrapfigure}

To show that Adversarial Surprise leads to emergence of complexity by phases, we plot the temporal acquisition of two behaviors in the MiniGrid environment. The results are shown in Figure \ref{fig:complexity}. This environment includes a dark room and a noisy room separated by a door. The position of the door changes at each episode. Initially, the agent is inside the noisy room and the door is open. One episode consists of 96 steps: the Control policy takes control of the agent during 32 steps, then the Explore policy takes control of the agent during 32 steps, finally the Control policy takes control of the agent during 32 steps. 
The first acquired behavior by the Control policy is identifying where the door is and going to the dark room during the first round. It is a \textit{short-term} surprise minimizing behavior and an agent trained with a SMIRL objective can converge to it. However, the Explore policy learns to go back to the noisy room and to reach the farthest point from the door such that the Control policy does not have the time to reach a state of minimum entropy before the surprise of the agent is computed in the reward. This in turn incentivizes the acquisition of a more complex behavior by the Control policy: it learns to go in the dark room and to lock the agent inside by closing the door during the first round, making it harder for the Explore policy to learn to reach a state that will surprise the agent during the Control policy's second round. This behavior reminiscent of Dr. Jekyll and Mr. Hyde highlights the potential of Adversarial Surprise to learn \textit{long-term} surprise-minimizing behaviors.

We also investigate the behaviors learned in a similar environment, which contains a switch that enables the agent to stop stochastic elements from changing color. 
Figure \ref{fig:control} measures how often agents trained with different techniques learn to press the switch. Since RND has no incentive to learn to control the environment, it never learns to press the switch. A similar result is observed for ASP, since reducing the entropy would make it easier for the Bob agent to replicate  the Alice agent's final state. Thus, ASP will not always lead the agent to learn all possible behaviors relevant to controlling the environment. Both SMiRL and AS learn to take actions to reduce entropy. However, when we train AS by resetting the buffer $\beta$ used to fit the density model $p_{\theta}(o)$ after each \textit{round} (that is, after both the Explore and Control policy have taken one turn), rather than after each episode, we see that AS increases the number of actions it takes to reduce entropy even over SMiRL.
This is likely because resetting the buffer removes any incentive to return to states that the agent has previously seen within its lifetime, and instead gives a stronger incentive to reduce entropy immediately. 

 \begin{figure}[h]
\centering
\begin{subfigure}{.48\textwidth}
  \centering
  \includegraphics[width=\linewidth]{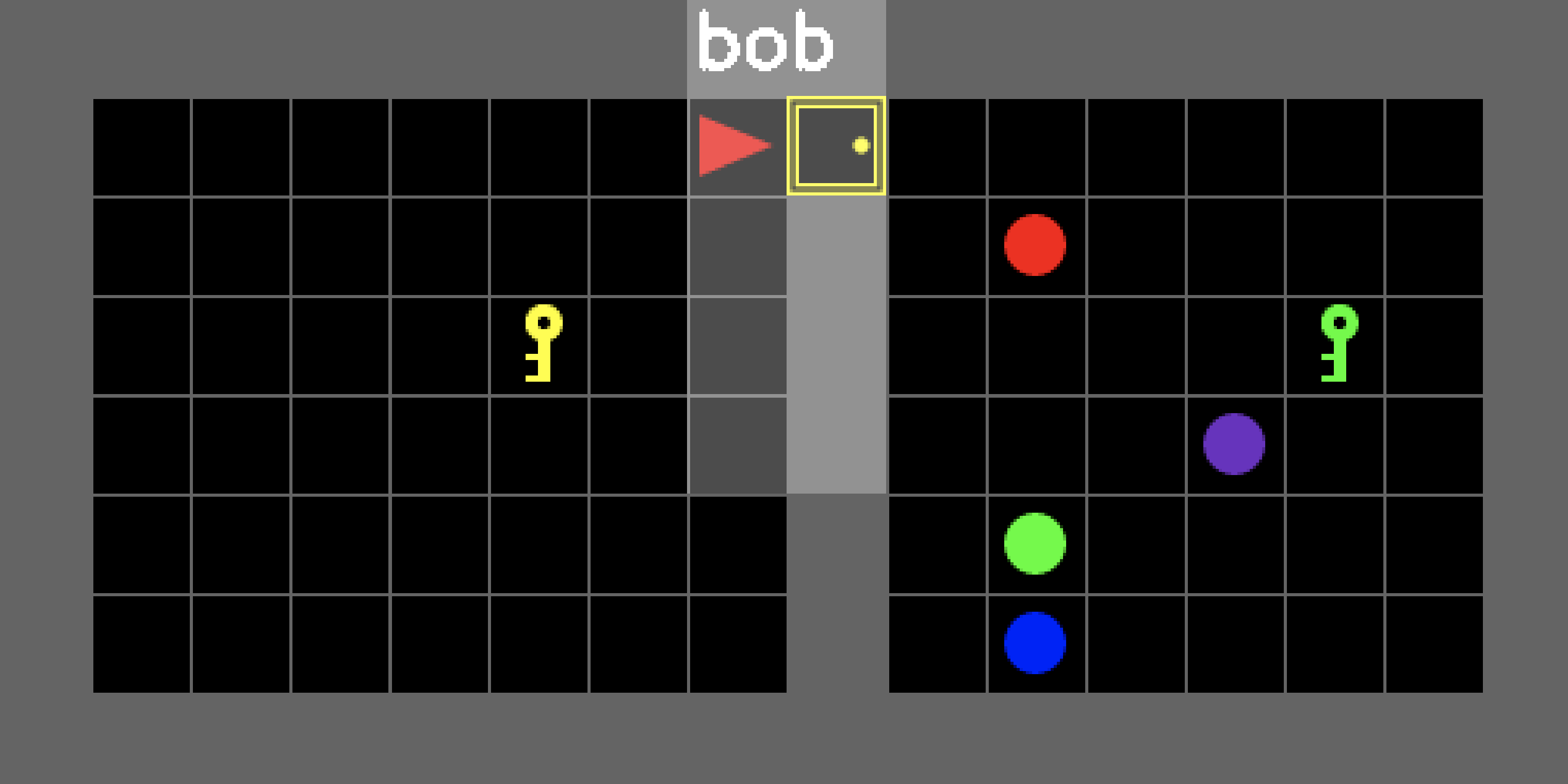}
  \caption{The Control policy learns to lock the agent in the dark room to minimize long-term surprise.}
\end{subfigure}\hfil%
\begin{subfigure}{.48\textwidth}
  \centering
  \includegraphics[width=\linewidth]{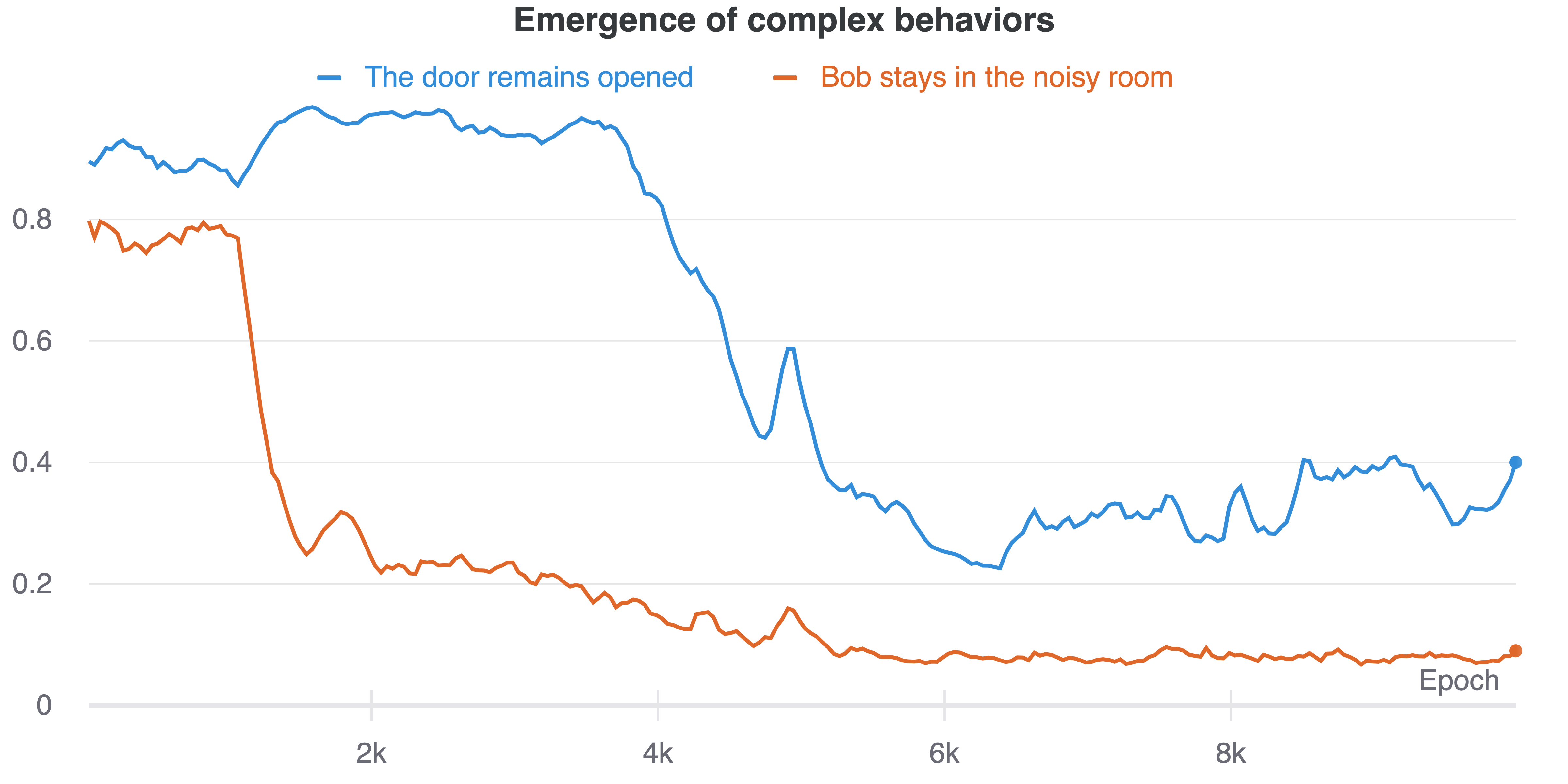}
  \caption{Acquisition of two behaviors by phases in order of complexity and long-term impact on the surprise.}
\end{subfigure}\hfil%
\caption{\textbf{Q3. Emergence of Complexity:} a) An example environment which contains a key that can be used to lock a door separating the agent from stochastic elements. b) We observe two relatively short phase transitions separating three learning phases with three clearly distinguishable behaviors: randomly exploring, going to the dark room, locking the agent in the dark room (left). This is evidence of an emergent curriculum induced by the multi-agent competition.}
\label{fig:complexity}
\end{figure}

\subsection{Ablation Study on the Explorer and Controller Horizon}
We perform an ablation study on the length of each round for the Explore and Control policy in the MiniGrid environment and observe that the optimal horizon is between 8 and 16.

\begin{figure}[ht!]
\centering
\includegraphics[width=\linewidth]{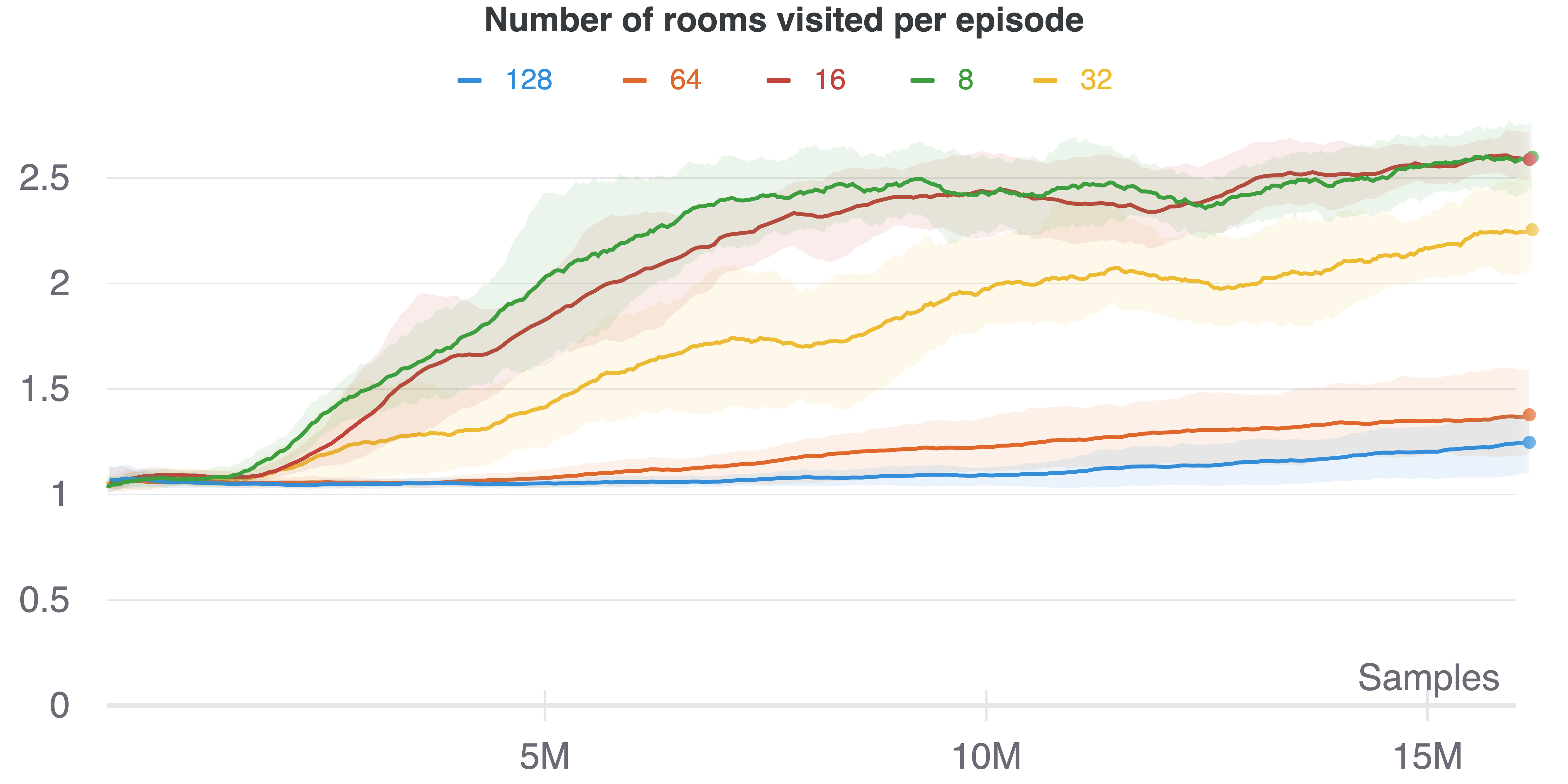}
\caption{Ablation Study on the Explorer and Controller Horizon.}
\label{fig:intro}
\vspace{-0.6cm}
\end{figure}

\section{Hyperparameter details}
\label{app:hparams}


In every experiment, both the explorer's and controller's policies receive a stack of the 4 last observations, a sufficient statistic of the current observation density model and the index of the current time step. The 4 last observations are stacked with the sufficient statistic before being fed into a convolutional layer. Then the output of the convolutional layer is stacked with the index of the current time step before being fed into a feed forward layer. At every step of the second-half of the controller's trajectories, we compute the error of the current observation with respect to the current observation density model and then feed the adversarial surprise buffer with the current observation to update the density model. The negative error is the reward of the controller for the current time step. By default, buffer is reset at the beginning of every episode. We also experiment with when to reset the buffer $\beta$; we find that resetting the buffer after each \textit{round} (after the Explore policy and Control policy each take one turn) can sometimes improve performance. In our \textit{round buffer} variant, used in \ref{fig:control}, the buffer is reset at every round.

\subsection{Minigrid}
We use 3 convolutional layers with 16, 32 and 64 output channels and a stride of 2 for every layers. 
For the MiniGrid experiments using $7\times7$ observations, for our observation density model we use $7\times7$ independent categorical distributions with 12 classes each instead of independent Gaussian distributions, which significantly improve the results. The sufficient statistic is the set of $7\times7\times12$ probabilities. In one episode, each agent acts during 2 rounds of 32 steps per round (Explorer-Controller-Explorer-Controller), for a total of 128 steps. We use 16 parallel environments for AS, RND, ASP and SMIRL. For the baselines, we base our RND implementation on the github repo from jcwleo, which is a reliable PyTorch implementation (https://github.com/jcwleo/random-network-distillation-pytorch), our ASP implementation on the github repo from the author (https://github.com/tesatory/hsp), our SMIRL implementation on the github repo from the author (https://github.com/Neo-X/SMiRL\_Code), and our AGAC implementation is based on the Pytorch version of the github repo from the author (https://github.com/yfletberliac/adversarially-guided-actor-critic/tree/main/agac\_torch. 

\subsection{Atari}

We use the Atari pre-processing wrappers from \cite{baselines} before feeding input into a five-layer architecture with three convolutional layers with 4, 32, and 64 output channels, kernel sizes of 8,4,3, and strides of 4, 2, and 1 for the explorer. We downsize the input images to $4 \times 20 \times 20$ and for our observation density model we use $4 \times 20 \times 20$ independent Gaussian distributions. The sufficient statistic is the set of means and standard deviations of the $4 \times 20 \times 20$ Gaussian distributions. For the Atari environments, we run the explorer for 64 steps and controller for 128 steps and alternate until the end of an episode and reset the buffer after every life lost. The four Atari environments used for testing are shown in Figure \ref{fig:atari1}

\subsection{Doom}

We use the \emph{Take Cover} scenario on the ViZDoom \cite{DBLP:journals/corr/KempkaWRTJ16} platform. We feed the input to a five-layer architecture with three convolutional layers with 32, 32, and 64 output channels, kernel sizes of 4,4,3, and strides of 2, 2, and 1 for the explorer. We downsize the input images to $4 \times 20 \times 20$ and for our observation density model we use $4 \times 20 \times 20$ independent Gaussian distributions. The sufficient statistic is the set of means and standard deviations of the $4 \times 20 \times 20$ Gaussian distributions. We similarly run the explorer for 64 steps and controller for 128 steps and alternate until the end of an episode and reset the buffer after a life has been lost. The scenario used for testing is shown in Figure
\ref{fig:doom}

\begin{figure}[tb]
\centering
\begin{subfigure}{.25\textwidth}
  \centering
  \includegraphics[width=\linewidth]{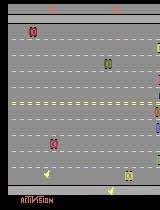}
  \caption{Freeway}
\end{subfigure}\hfil%
\begin{subfigure}{.25\textwidth}
  \centering
  \includegraphics[width=\linewidth]{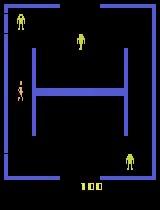}
  \caption{Berzerk}
\end{subfigure}\hfil%
\begin{subfigure}{.21\textwidth}
  \centering
  \includegraphics[width=\linewidth]{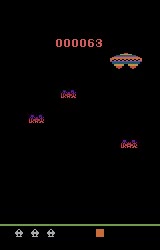}
  \caption{Assault}
\end{subfigure}\hfil%
\begin{subfigure}{.25\textwidth}
  \centering
  \includegraphics[width=\linewidth]{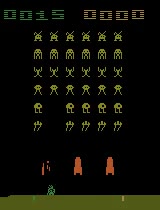}
  \caption{Space Invaders}
\end{subfigure}\hfil
\caption{The four Atari environments we used to test each method.}
\label{fig:atari1}
\end{figure}

\subsection{Plots}
\label{app:results}

For each environment and each method, we run a total of six random seeds and compare the top three, as a measure of the best-case performance. Plots are generated by smoothing the obtained returns using a rolling window of steps corresponding to 25, 15, 10, 30 for Freeway, 30, 40, 30, 20 for Berzerk, 10,20,5,1 for Assault, and 10, 15, 10, 2 for Space Invaders. Then, we plot both the mean and error bars representing a 95\% confidence interval over the seeds. 

A website with videos of the experiments is available here: \url{https://sites.google.com/view/adversarial-surprise/home}. 

\section{Algorithm}
\label{app:algo}
\begin{algorithm}[h]
\SetAlgoLined
\footnotesize
 Randomly initialize $\phi^E$ and $\phi^C$\; 
 \For{episode = $0,...,M$}{
  Initialize $\theta, R^i = 0$, $\beta \leftarrow \{\}$, \textit{explore\_turn }= True, $t^C=k$, $s_0 \sim p(s_0), o_0 \sim p(O_0 | s_0)$\;
  \For{$t\gets0$ \KwTo $T$}{ 
    \eIf{explore\_turn}{
        $a_t \sim \pi^E(o_t,h^E_t)$ \tcp*{Explore}
      }{
        $a_t \sim \pi^C(o_t,h^C_t)$ \tcp*{Control}
      } 
    $s_{t+1} \sim \mathcal{T}(s_{t+1}|s_t,a_t)$, $o_{t+1} \sim p(O_{t+1} | s_{t+1})$ \tcp*{Environment step}
    $r^i_t = \log p_{\theta}(o_{t+1})$ \tcp*{Compute intrinsic reward}
    \If{not explore\_turn and $t - t^C > k/2$}{
        $R^i = R^i + r^i$ \;
    }
    $\beta = \beta \cup \{o_t,a_t,o_{t+1} \}$ \tcp*{Update buffer}
    \If{$t == t^C$}{
        \textit{explore\_turn} = not \textit{explore\_turn} \tcp*{Switch turns}
        $t^C = t^C + 2k$\;
    } 
    $\theta_{t+1} = $MLE\_update$(\beta,\theta_t)$ \tcp*{Fit density model}
  } 
  $\phi^E  = $RL\_update$(\beta,-R^i)$ \tcp*{Train Explore policy $\pi^E$ with reward $-R^i$}
  $\phi^C  = $RL\_update$(\beta,R^i)$ \tcp*{Train Control policy $\pi^C$ with reward $R^i$}
 } 
 \caption{Adversarial Surprise}
 \label{alg:as}
\end{algorithm}

\section{Broader impact:}
\label{app:broader_impact}
AS is an intrinsic motivation method aimed at enhancing learning in intelligent agents.
There has been a rich history of previous work in this area, where the goal has been to develop more general algorithms for learning in the absence of task-specific rewards.
This work is in line with previous work and should not directly cause broader harms to society. We are not using any dataset or tool that will perpetuate or enforce bias. Nor does our method make judgements on our behalf that could be misaligned with our values. A potential positive impact of this work may be providing interesting insight into the nature of intelligence and learning.

\end{document}